\pgfplotsset{compat=newest}
\newtheorem{theorem}{Theorem}[section]
\newtheorem{corollary}{Corollary}[theorem]
\newtheorem{lemma}[theorem]{Lemma}
\newtheorem{definition}[theorem]{Definition}
\title{A General Framework for Auditing Differentially Private Machine Learning}
\author{%
  Fred Lu \\
  Booz Allen Hamilton\\
   \And
   Joseph Munoz \\
   Booz Allen Hamilton \\
   \AND
   Maya Fuchs \\
  Booz Allen Hamilton \\
   \And
   Tyler LeBlond \\
  Booz Allen Hamilton \\
   \And
   Elliott Zaresky-Williams \\
   Booz Allen Hamilton\\
    \And
    Edward Raff \\
    Booz Allen Hamilton \\
    \And
    Francis Ferraro \\
    University of Maryland, Baltimore County
    \And
    Brian Testa\thanks{Approved for Public Release; Distribution Unlimited. PA \#: AFRL-2022-3247}\\
    Air Force Research Laboratory
}
\begin{document}

\maketitle

\begin{abstract}
We present a framework to statistically audit the privacy guarantee conferred by a differentially private machine learner in practice. While previous works have taken steps toward evaluating privacy loss through poisoning attacks or membership inference, they have been tailored to specific models or have demonstrated low statistical power. Our work develops a general methodology to empirically evaluate the privacy of differentially private machine learning implementations, combining improved privacy search and verification methods with a toolkit of influence-based poisoning attacks. We demonstrate significantly improved auditing power over previous approaches on a variety of models including logistic regression, Naive Bayes, and random forest. Our method can be used to detect privacy violations due to implementation errors or misuse. When violations are not present, it can aid in understanding the amount of information that can be leaked from a given dataset, algorithm, and privacy specification.
\end{abstract}

\section{Introduction}

Machine learning (ML) is increasingly deployed in contexts where the privacy of the data used to build the model is of concern. ML models are capable of ingesting enormous quantities of data in order to determine meaningful predictive patterns. When such models are built on potentially sensitive inputs such as healthcare \cite{azencott2018machine} or financial data \cite{khandani2010consumer}, it becomes important to determine to what extent information from the training dataset can be inferred from the model. Such information can be of personal interest when the identity of a data contributor may be compromised based on involvement in the data collection process \cite{slavkovic2015privacy,fredrikson2014privacy}, but it can also lead to regulatory concern about whether a model trained on private data should itself be considered private \cite{cummings2018role}.

Differentially private (DP) machine learning is a theoretically rigorous approach that provides a worst-case privacy guarantee for ML algorithms \cite{dwork2014algorithmic}. Specifically, it provides a mathematical guarantee that the distribution of possible output models based on the input dataset is not significantly altered whether an individual data point is included in the training set or not. This is accomplished by randomizing the output model in a specific manner so that the distributions of the model trained with or without any individual data point are statistically indistinguishable up to a level specified by parameters $\varepsilon, \delta$ that must be set appropriately. While DP mathematically certifies the privacy of a mechanism, the noise added is generally tailored to the worst case scenario.

Previous results attempting to assess the privacy leakage of such models, using attacks ranging from membership inference to model inversion, have inferred that the actual detectable risk of a model procedure is lower than the theoretical bound \cite{jayaraman2019evaluating,carlini2019secret}. While a recent study on DP neural networks indicates that the privacy guarantee of the DP-SGD algorithm is tight under the strongest adversary \cite{nasr2021adversary}, the privacy risk under realistic datasets and attackers without complete access to the training process still lies below the worst case bound. Across other models, the question remains of whether the privacy violation of a mechanism is lower than specified or whether the attacks previously used are not strong enough. Given that there are no guidelines on how to set $\varepsilon, \delta$ in practice, it would be useful for practitioners to estimate the actual risk they incur in practice and set the parameters accordingly. This is critical when DP for ML is still a nascent field, with limited implementations that can be hard to verify.

Recent works have shown that simple DP mechanisms can be audited effectively \cite{ding2018detecting,bichsel2021dp}. These mechanisms generally map an input vector to a perturbed output vector, and are often used as building blocks of more complicated DP algorithms such as machine learning models. Examples include the Laplace and exponential mechanisms which are respectively used in DP Naive Bayes and random forest \cite{vaidya2013differentially,fletcher2017differentially}. The auditing procedure involves searching for optimal neighboring input sets $a, a'$ and sampling the DP outputs $\mathcal{M}(a), \mathcal{M}(a')$, to get a Monte Carlo estimate of $\varepsilon$. To extend these techniques to audit a machine learning model, the vector inputs $a, a'$ are replaced with datasets $D, D'$ which differ by a single entry. Working in this realm raises important challenges. First, previously effective search methods for neighboring inputs involving enumeration or symbolic search are impossible over large datasets, making it difficult to find optimal dataset pairs. In addition, Monte Carlo estimation requires costly model retraining thousands of times to bound $\varepsilon$ with high confidence.

We propose solutions to these problems to accommodate the general auditing approach to machine learning algorithms. For the first concern, we develop \textit{a set of data poisoning attacks which perturb a single point of dataset $D$}, approximating a worst-case neighboring dataset $D'$ specific to $\mathcal{M}$. To address the second, we present a statistical framework combining elements of \citet{bichsel2021dp} and \citet{jagielski2020auditing} \textit{to estimate $\varepsilon$ for general ML procedures for smaller sample sizes}. Our techniques involve \textit{improved estimation bounds} and, more significantly, \textit{learning probabilities over general model spaces}, while previous works can only estimate probabilities over a single output (the predictive probability).

These novel contributions form our framework \textit{ML-Audit} for auditing the DP of arbitrary machine learning models. Compared to prior works we can obtain orders of magnitude improvement in the estimation of $\varepsilon$ that are effective across a larger range of $\varepsilon$, more datasets, and more model types. Our framework is compatible with prior model-specific approaches (small neural networks), obtaining similar or improved efficacy. Further still, we provide case studies where our framework detects potential violations in existing implementations of DP machine learning.

We review these prior approaches in \autoref{sec:related_work}, and introduce key DP concepts in \autoref{sec:background}. Then we present our framework in \autoref{sec:framework}, followed by results and discussion in \autoref{sec:results}. Finally we conclude in \autoref{sec:conclusion}.

\section{Related work} \label{sec:related_work}

Although differential privacy of a mechanism is established with mathematical proof, previous work has shown that holes can exist, either in theory, such as the sparse vector mechanism (of which erroneous versions have been proposed \cite{ding2018detecting}), or implementation, for example sampling non-uniformity in pseudo-random number generators \cite{mironov2012significance}. Such occurrences point to the need for empirical verification of the promised security of differential privacy mechanisms. This is critical as lapses in DP are likely to provide greater predictive performance, so choosing a model to obtain maximal accuracy given a chosen level of privacy is likely to select these errant models. 

Recent works propose efficient solutions for auditing simple differential privacy mechanisms operating on scalar or vector inputs \cite{ding2018detecting,bichsel2021dp}. In such cases, the neighboring inputs can be enumerated tractably (e.g. for a vector input, trying adding one to each input element). For each neighboring pair, the appropriate output set is determined, and Monte Carlo probabilities are measured to determine privacy. The sampling process in such mechanisms is vectorized to greatly reduce the runtime \cite{bichsel2021dp}. These works develop a rigorous statistical framework for DP auditing which serves as the basis of current ML auditing approaches, including our work.

Few studies have attempted to audit DP machine learners. We are aware of two works that measure the privacy of neural networks trained with the DP-SGD mechanism \cite{abadi2016deep}. \citet{jagielski2020auditing} develops a poisoning attack (ClipBKD) by constructing a data point along the axis of least variance in the dataset. This causes the gradients of the point to be distinguishable, mitigating the effect of the DP-SGD clipping and noise. Since this attack is model-agnostic, we adopt it as a baseline for all our models. We show considerably improved performance with our method and that the least variance approach suffers as the sphericity of the dataset increases.

The second work analyzes DP-SGD through a series of increasingly white-box attacks \cite{nasr2021adversary}. They note that while the DP-SGD privacy guarantee is formulated for all neighboring datasets, the mechanism itself operates purely via gradient manipulation. Therefore they are able to formulate stronger attacks which directly target the gradient or make use of adaptive poisoning. While these are not usable for general-purpose ML auditing, we implement their poisoning attack, based on adversarial perturbation, for our DP-SGD experiments. We find that this attack performs comparably with ClipBKD on DP-SGD.

While their works are important progenitors to ours, they estimate the privacy loss by thresholding a single scalar output, e.g. the loss on a specific poisoning point. This is a major bottleneck limiting the ability to audit other machine learning algorithms. Instead of a scalar, our approach learns distributions over any vector representation of a model, allowing us to be the first to propose a general procedure for auditing any DP learner. 

Another predecessor established the complementary relationship between data poisoning and differential privacy, using a poisoning attack based on gradient ascent of logistic regression parameters $\theta$ with respect to $x$ to reach a target $\theta^\star$ \cite{ma2019data}. The result has similar form to a perturbation influence function which we discuss \cite{ting2018optimal,koh2017understanding}. However, the aim of their study is to evaluate the effect of differential privacy on poisoning strength rather than vice versa. In addition, their attack formulation requires manual gradient computation which does not readily extend to other models. We include a form of their attack to compare in our logistic regression experiments.

\section{Differential privacy overview} \label{sec:background}

We provide a background on key elements of differential privacy relevant to our work.

\begin{definition}
A randomized learner $\mathcal{M}: \mathcal{D} \times \mathcal{B} \mapsto \Theta$ is a function mapping a training dataset $D\in \mathcal{D}$ and auxiliary noise $b \in \mathcal{B}$ to an output model $\theta \in \Theta$. 
\end{definition} 
If we train the learner repeatedly on the same dataset, we can obtain a different model $\theta$ each time based on the noise source $b$. For simplicity we implicitly denote the learner as a randomized function $\mathcal{M}(D)$ with a probability distribution $\mathbb{P}$ induced by $b$.

\begin{definition}
A randomized learner $\mathcal{M}$ is considered $(\varepsilon, \delta)$-differentially private if for all datasets $D, D' \in \mathcal{D}$ differing in a single entry and measurable sets $S$ over the output space $\Theta$,
\begin{equation} \label{eq:dp}
    \mathbb{P}(\mathcal{M}(D) \in S) \leq e^{\varepsilon} \mathbb{P}(\mathcal{M}(D') \in S) + \delta
\end{equation}
\end{definition}

In this definition, the $\varepsilon$ term is the primary quantity of interest because it bounds the ratio between model output probabilities when the original dataset $D$ is perturbed by a single row. Setting $\varepsilon=\delta=0$ would enforce that $M(D)$ equals $M(D')$ in distribution, while a large $\varepsilon$ or $\delta$ permits the distributions to be arbitrarily different. As $\varepsilon \to \infty$ the model distribution converges to $\mathcal{M}_\infty(D)$, which in most cases is identical to the standard non-private version of the model. The $\delta$ term permits some additive slack in the probability bound, but in most purposes is set to an $o(1/n)$ quantity. 

Given $\varepsilon$ and $\delta$, $\mathcal{M}$ calibrates the noise distribution to the \textit{sensitivity} of the model function \cite{dwork2006calibrating}.

\begin{definition} \label{eq:sens}
The sensitivity of a function $f : \mathcal{D} \to \mathbb{R}^d$ is
$ \max_{D, D'} \lVert f(D) - f(D') \rVert $ where $D, D'$ differ by one row.
\end{definition}

Our main goal is to audit the privacy of a known mechanism whose inner workings may be hidden, so we assume in our work that the user has access to the mechanism, the privacy parameters, and the dataset, but does not have control over the training process or the randomness $b$.

The final tool which we will employ in our framework is \textit{group privacy}, which iterates over Eq. \ref{eq:dp} to bound the probabilities when $D, D'$ differ by multiple points.

\begin{lemma}  \label{lem:group}
Suppose $\mathcal{M}$ is $(\varepsilon, \delta)$-private and $D, D'$ differ by $k$ rows. Then for all measurable $S \subset \Theta$,
\begin{equation}
    \mathbb{P}(\mathcal{M}(D) \in S) \leq e^{k\varepsilon} \mathbb{P}(\mathcal{M}(D') \in S) + \delta \cdot \frac{1 - e^{k\varepsilon}}{1 - e^{\varepsilon}}
\end{equation}
\end{lemma}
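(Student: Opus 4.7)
The plan is to proceed by induction on $k$, the number of rows by which the two datasets differ. The base case $k=1$ is exactly the definition of $(\varepsilon,\delta)$-differential privacy, since the coefficient of $\delta$ simplifies to $(1-e^{\varepsilon})/(1-e^{\varepsilon}) = 1$. For the inductive step, I would assume the bound holds for datasets differing in $k-1$ rows and establish it for datasets differing in $k$ rows.

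The key construction is a chain of datasets $D = D_0, D_1, \ldots, D_k = D'$ in which each consecutive pair $D_{i}, D_{i+1}$ differs in exactly one row. This is always possible: starting from $D$, swap in the $k$ distinguishing rows of $D'$ one at a time. Then $D_0$ and $D_{k-1}$ differ by $k-1$ rows, so the inductive hypothesis gives a bound of $\mathbb{P}(\mathcal{M}(D_0) \in S)$ in terms of $\mathbb{P}(\mathcal{M}(D_{k-1}) \in S)$, and the base DP guarantee bounds $\mathbb{P}(\mathcal{M}(D_{k-1}) \in S)$ in terms of $\mathbb{P}(\mathcal{M}(D_k) \in S)$. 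Substituting one into the other immediately gives $e^{k\varepsilon}\mathbb{P}(\mathcal{M}(D') \in S)$ as the leading term, as desired.

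What remains, and where the bookkeeping lies, is checking that the accumulated $\delta$ coefficient collapses to the claimed geometric-series form. After substitution, the coefficient of $\delta$ is
\begin{equation*}
e^{(k-1)\varepsilon} + \frac{1 - e^{(k-1)\varepsilon}}{1 - e^{\varepsilon}}.
\end{equation*}
Putting this over the common denominator $1-e^{\varepsilon}$ and canceling the $e^{(k-1)\varepsilon}$ and $-e^{(k-1)\varepsilon}$ terms leaves $(1 - e^{k\varepsilon})/(1-e^{\varepsilon})$, matching the statement. Equivalently, one can recognize the coefficient as the partial geometric sum $1 + e^{\varepsilon} + \cdots + e^{(k-1)\varepsilon}$.

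The main obstacle is really only this algebraic simplification; the conceptual step, telescoping neighboring-pair applications of the basic DP inequality along a chain, is standard. A minor subtlety worth flagging is that the formula must be interpreted in the limit $\varepsilon \to 0$ (where the coefficient becomes $k$), but for any $\varepsilon > 0$ no such care is needed and the manipulation above goes through directly.
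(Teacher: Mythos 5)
Your proof is correct and takes essentially the same approach as the paper: both build a chain of intermediate datasets differing in one row at a time and telescope the one-step $(\varepsilon,\delta)$ inequality, with the accumulated $\delta$-coefficient collapsing to the geometric sum $\sum_{j=0}^{k-1} e^{j\varepsilon} = (1-e^{k\varepsilon})/(1-e^{\varepsilon})$. Your formal induction is just a tidier packaging of the paper's ``continue the iteration'' step, and your algebraic check of the coefficient is right.
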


\section{Auditing framework} \label{sec:framework}

\begin{figure*}
    \centering
    \adjustbox{max width=\columnwidth}{
    \input{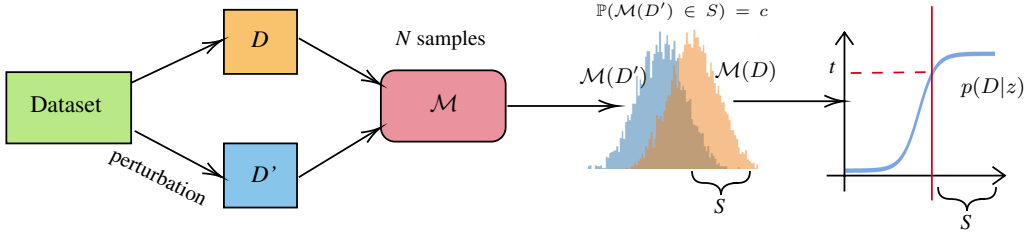}
    }
    \caption{Auditing procedure: A given dataset is perturbed to get $D'$. Mechanism $\mathcal{M}$ is trained on the $D$ and $D'$ to generate $N$ samples each. A classifier is trained on the samples to learn the posterior probability of dataset $D$ conditional on mechanism output $z$. A threshold $t$ is determined based on false positive constraint $c$ to determine the output set $S$.}
    \label{fig:flowchart}
\end{figure*}

A DP mechanism calibrates its randomization to a user-specified $\varepsilon_{th}$ representing the desired theoretical level of privacy. The goal of DP auditing is to empirically assess the actual privacy enabled by the mechanism, $\varepsilon^\star$. To do this, we determine a lower bound for $\varepsilon^\star$ with high statistical confidence. Observe that rewriting the definition of DP gives 
\[\varepsilon^\star \geq \ln \frac{\mathbb{P}(\mathcal{M}(D) \in S) - \delta}{\mathbb{P}(\mathcal{M}(D') \in S)} \]

For any given $(D, D', S)$ we can estimate the quantity with Monte Carlo by retraining $\mathcal{M}$ to generate samples and then compute a 95\% confidence interval. The lower bound of the interval $\hat \varepsilon_{lb}$ is the detected privacy violation, and with high probability, we state that $\varepsilon^\star \geq \hat \varepsilon_{lb}$ with $(D, D', S)$ as a witness. By maximizing $\hat \varepsilon_{lb}$ over all possible witnesses, we increase the lower bound on $\varepsilon^\star$ which we can then compare against the promised $\varepsilon_{th}$. Given a mechanism $\mathcal{M}$, the auditing objective is then to find \[\sup_{D, D'} \sup_S \ln \frac{\mathbb{P}(\mathcal{M}(D) \in S) - \delta}{\mathbb{P}(\mathcal{M}(D') \in S)} \]
that is, the maximum privacy loss ratio over neighboring datasets $D, D'$ and measurable output set $S$. For all mechanisms considered in this work $\delta<10^{-4}$ which is not detectable, so we simplify by setting $\delta=0$.

A successful auditing approach requires solutions to the two separate maximizations. We maximize over $D, D'$ by developing a toolkit of poisoning attacks which have the largest influence on the mechanism in question. Once $D$ and $D'$ are obtained, we fit a likelihood ratio-based optimization to determine an appropriate $S$. In the following, we discuss these procedures in detail.

\subsection{Optimizing $D, D'$} \label{sect:D_opt}

Given a dataset $D = (X, Y)$ we define a valid poisoning attack to be an operation which selects an existing data point $(x, y)$ and perturbs $x$ to any new point $x^\star$ within a constraint set $\mathcal{C}$. As we consider classification algorithms in this work, the class label $y$ can also be switched. The constraint $\mathcal{C}$ is specific to each DP algorithm and is used to determine the max sensitivity bound from Def. \ref{eq:sens} (as otherwise a single data point can arbitrarily affect the model). Without loss of generality, we set the constraint to be the smallest containing the training data. 

This is the only part of the framework which needs user design, as a strong attack against one mechanism may not be effective against another. However we distill the steps for constructing such attacks into a recipe:
\begin{enumerate}
\item The underlying model of $\mathcal{M}$ can be abstract, requiring a summary function $\tau$ to embed $\mathcal{M}$ into a vector space. Since post-processing of DP outputs does not decrease privacy \cite{dwork2014algorithmic}, we can select $\tau$ to preserve as much information about $\mathcal{M}$ as possible. In logistic regression the transformation is natural: we define $\tau\circ\mathcal{M}$ as the coefficient vector $\theta$ (Table \ref{tab:mechs}). To reduce notation overload we implicitly apply $\tau$ when we discuss empirical samples $z\sim \mathcal{M}(D)$.

\item Obtain a non-private algorithm $\mathcal{M}_\infty(D)$ as a surrogate. Note that as long as $E_b[\mathcal{M}(D, b)] \approx \mathcal{M}_\infty(D)$, meaning the DP model is on average equivalent to the non-private version, it is sufficient to find the optimal poisoning with respect to the non-private model. While not explored in this work, attacks can instead take into account the actual noise distribution by averaging over samples of $\mathcal{M}(D, b)$, at the cost of additional run-time.

\item Determine a poisoning $(x^\star, y^\star)$ from $(x,y)$  maximizing the distance between $\tau(\mathcal{M}_\infty(D))$ and $\tau(\mathcal{M}_\infty(D'))$. This involves a selection and a perturbation step.
\end{enumerate}

\begin{table}[ht]
    \centering
    \adjustbox{max width=\columnwidth}{
    \begin{tabular}{ccc} \hline
                            & Bound $\mathcal{C}$ & Summary $\tau$ \\ \hline
        Log. reg. &  $L_2$-norm & coefficient $\theta$ \\ 
        NB & hyper-rectangle & $\mu_{yd}, \sigma^2_{yd}, \pi_y$\\
        RF & hyper-rectangle & $\Pr(x^\star)$ of each tree \\
        DP-SGD & $L_2$-norm of $\nabla_\theta L(x, y; \theta)$ & $\Pr(x^\star),\Pr(\vec{0}),\Pr(x^{test}_i)$ \\ \hline
    \end{tabular}
    }
    \caption{Constraints and summary functions used.}
    \label{tab:mechs}
\end{table}

Ahead we detail the poisoning attack design for each mechanism:

\textbf{Logistic Regression.} DP logistic regression is achieved using one of three approaches: objective, output, or gradient perturbation. We evaluate objective and output perturbation using the widely used method developed by \citet{chaudhuri2008privacy} and \citet{chaudhuri2011differentially}. 

For the attack we adopt the influence function, a technique 
which estimates the change in a model when a specific data point is infinitesimally upweighted in the training set \cite{hampel1974influence,koh2017understanding}. For a general model with gradient and Hessian information, the effect of point $x^\star$ on the model parameters for a loss function $L$ can be approximated as $ \mathcal{I}_{\theta}(x^\star) = -H_\theta^{-1} \nabla_\theta L(x^\star, \hat\theta)$. In generalized linear models with canonical link function where $L$ is the negative log likelihood, this has a closed form involving the Fisher information as the expected Hessian \cite{ting2018optimal}, which for logistic regression evaluates to $\mathcal{I}_\theta(x^\star) = (y^\star - \hat y^\star) (X^\top W X + \lambda I)^{-1} x^\star $
where $W$ is a diagonal matrix with entries $W_{ii} = \hat y_i(1-\hat y_i)$, and $\lambda$ regularization.

We choose $\tau$ to be the coefficient vector $\theta, \theta'$ of $\mathcal{M}_\infty(D),\mathcal{M}_\infty(D')$ respectively. We first select $(x, y)$ closest to the corner of the hyper-rectangle containing the data (a heuristic for selecting a point far from where the separating hyperplane would likely be) and initialize $(x^\star, y^\star)$ as the mean point in the opposite class. Since our goal is to increase the distance  $\lVert \theta - \theta'\rVert_2$, we optimize the $L_2$ norm of the influence function using projected gradient ascent. The constraint set $\mathcal{C}$ is the $L_2$ norm ball containing the training set, so after each iteration we clip $x^\star$ if needed.

\textbf{Naive Bayes.} The Gaussian Naive Bayes model parameterizes the class-conditional distribution of features using independent Normal distributions. The parameters involve a mean $\mu_{yd}$ and variance $\sigma^2_{yd}$ for each feature and class combination, as well as prior probabilities for each class $\pi_y$. To achieve differential privacy, Laplace or Geometric noise are added to the maximum likelihood estimates for each parameter \cite{vaidya2013differentially}.
The constraint set $\mathcal{C}$ is a hyper-rectangle set by the smallest and largest value of each feature in the dataset.

We choose $\tau$ to be the vector of all $\{\mu_{yd}, \sigma^2_{yd}, \pi_y \}$. The maximum influence a perturbation can have on $\hat\mu_y$ of class $y$ is by selecting a point on the corner of the hyper-rectangle $\mathcal{C}$ nearest the data points of class $y$ and placing it on the corner furthest away. On the other hand, the maximum influence attack on $\pi_y$ is to flip a class label. We combine the two ideas by flipping the class label of the point closest to a corner. In our experiments we compare the power against only attacking $\mu_y$.

\textbf{Random Forest.} The DP random forest mechanism uses unsupervised random splits of the features based on the domain of the inputs. Each tree is split to a pre-determined depth. Then the training points are percolated through the forest, and the majority label of each leaf is sampled from the Exponential mechanism \cite{fletcher2017differentially}. Thus a perturbed training point has no impact on the actual tree structure besides the potential label of a single leaf in each tree. As a result we measure only the change in those leaves, choosing $\tau$ to be the prediction of each tree on $x^\star$.

The most detectable change in probability occurs when $j=1$ and then the majority class is swapped by a class flip. For example, if a leaf has one positive and two negative points, we flip the class of one of the negatives. Since each tree is random, there is no guarantee that any given point will be in such a situation where $j=1$, but to increase this chance we target points which are likely to be solitary in each leaf. Thus we select the point most distant from the rest of the training set as measured by $L_1$-distance and flip its label. We provide further exposition and hyperparameter details in the Appendix.

\textbf{DP-SGD.} For DP-SGD we evaluate two extent poisoning attacks using our framework. The \textit{ClipBKD} attack constructs a data point along the axis of least variance in the dataset \cite{jagielski2020auditing}. This can be performed by taking the eigenvector of the covariance matrix with the smallest eigenvalue, and then projecting it to the median L2 norm of the training set. This point is assigned the label with smallest predicted probability. The goal of this attack is to enable the gradients of the point to be distinguishable, mitigating the effect of the DP-SGD clipping and noise.

The second attack selects a random datum and maximizes the loss with gradient ascent. The loss is defined with respect to a set of trained shadow models 
\cite{nasr2021adversary}. In both attacks, the outcome $\tau(\mathcal{M}(D))$ being measured is the predicted probability of the perturbed point $P_\mathcal{M}(x^\star = 1)$, while ClipBKD first subtracts the zero vector's predicted $P_\mathcal{M}(\vec 0)$. We also consider an updated ClipBKD with the zero prediction encoded separately with additional test predictions in $\tau$: $P_\mathcal{M}(\vec 0)$, $P_\mathcal{M}(x_i^{test})$.

\subsection{Optimizing $S$}

Our approach for determining $S$ is based on the framework of \citet{bichsel2021dp}, which derives the optimal region for distinguishing two vector distributions. Given candidate datasets $(D, D')$ with DP distributions $\mathcal{M}(D)$ and $\mathcal{M}(D')$, we seek $S$ maximizing $\mathbb{P}(\mathcal{M}(D) \in S) / \mathbb{P}(\mathcal{M}(D') \in S)$. Let $f_{\mathcal{M}(D)}$ and $f_{\mathcal{M}(D')}$ be their respective densities. Selecting $S$ to best distinguish two distributions is analogous to identifying an optimal rejection region in hypothesis testing. From the Neyman-Pearson Lemma (cf. Appendix), the maximum power test at level $c$ is defined by rejection region $S\coloneqq \{z\ \vert\ f_{\mathcal{M}(D)}(z) / f_{\mathcal{M}(D')}(z) > k\}$ for some $k > 0$, where  $\mathbb{P}(\mathcal{M}(D') \in S) = c$ \cite{lehmann2006testing}.

Intuitively, we have a likelihood ratio $f_{\mathcal{M}(D)}(z) / f_{\mathcal{M}(D')}(z)$ and we are selecting points to fill $S$ where $D$ is more likely than $D'$. We start by adding points where $D$ is most likely and work downwards until $S$ is large enough that there is a $c$ chance that $\mathcal{M}(D')$ is erroneously in $S$.

Directly obtaining these densities is intractable. Instead we can learn, given collected samples $\{\mathcal{M}(\mathbf{D})\}$ where $\mathbf{D}\in \{D, D'\}$, the posterior probability of the dataset $D$:
$$p(D|z) \coloneqq \Pr(\mathbf{D}=D | \mathcal{M}(\mathbf{D}) = z)$$
Let us consider the set $L^k \coloneqq \{z\ \vert\ p(D| z) / p(D' | z) > k\}$. From Bayes' Theorem we know $p(D| z) \propto f_{\mathcal{M}(D)}(z) \cdot \Pr(\mathbf{D}=D)$.
Given equal samples of $\mathbf{D}$ from $D$ and $D'$, then $P(\mathbf{D}=D) = P(\mathbf{D}=D')$ implying
$p(D| z) / p(D' | z) = f_{\mathcal{M}(D)}(z) / f_{\mathcal{M}(D')}(z)$.

Therefore $S=L^k$ and we use it as the rejection set. Finally, we can further simplify $S$ by observing that $p(D|z) + p(D'|z) = 1$, and so
$S = \{z\ \vert\ p(D| z) > t \}$ where $t \coloneqq k / (1+k)$. What remains is to set $t$ according to the level $c$ so that $\mathbb{P}(\mathcal{M}(D') \in S) = c$. As this is equivalent to $\mathbb{P}(p(D|z) > t\ \vert\ z \sim \mathcal{M}(D'))$, we set $t$ empirically to correspond to the $(1-c)$-quantile of $\mathcal{M}(D')$.

Our approach introduces the following further adaptations on \citet{bichsel2021dp} to enable auditing on slower, potentially expensive machine learning mechanisms. (The original work used around $10^8$ samples for auditing simple mechanisms, which is infeasible for training machine learning models. We use $N=10000$ samples for all mechanisms except DP-SGD where $N=500$.)

\begin{enumerate}
    \item We make the method flexible by searching $c$ empirically to maximize $\tilde\varepsilon_{lb}$, rather than setting $c=0.01$ as in the original. When $N\to\infty$, smaller values of $c$ are optimal since the likelihood ratio is largest. At smaller sample sizes, however, this is offset by increasing uncertainty in Monte Carlo estimation of small probabilities. 
    
    \item We use group privacy to increase statistical distinguishability, a trick used also in \citet{jagielski2020auditing}. After obtaining $(x^\star, y^\star)$, we insert $k$ copies of the point. From Lemma \ref{lem:group} the detected privacy $\tilde\varepsilon_{lb}$ is reduced to $\tilde\varepsilon_{lb} / k$. However, the extra copies causes $\mathcal{M}(D')$ to be more distinguishable which makes $\mathbb{P}(\mathcal{M}(D) \in S)$ larger relative to $\mathbb{P}(\mathcal{M}(D') \in S)$. This is also a tradeoff: At small true $\varepsilon$ the lower bound $\tilde\varepsilon_{lb}$ improves despite division by $k$, whereas at high $\varepsilon$ the distributions are sufficiently different at $k=1$ that it is not worth dividing $\tilde\varepsilon_{lb}$. We include empirical analysis of $k$ in our Appendix.
    
    \item All previous auditing works (including below) use Clopper-Pearson intervals for $\hat p_1$ and $\hat p_0$ to determine the lower bound. This method is highly suboptimal because it separately bounds the numerator and denominator. We identify instead the Katz-log confidence interval to directly bound the ratio of binomial proportions \cite{fagerland2015recommended}, see Lemma \ref{eq:katz}. This has far better coverage properties in our simulations, as shown in the Appendix.
\end{enumerate}

Our modified procedure is presented in Algorithm \ref{algo:dp} and summarized visually in Fig. \ref{fig:flowchart}.

\begin{algorithm}[t]
\caption{ML-Audit: Optimizing output set $S$}
\label{algo:dp}
\begin{algorithmic}[1]
\Require Datasets $D, D'$, DP mechanism $\mathcal{M}$, min. probability constraint $r$, confidence $\alpha$, sample size $N$
\State Sample $z_1, \ldots, z_N \sim \mathcal{M}(D)$, $z'_1, \ldots, z'_N \sim \mathcal{M}(D')$
\State Generate labels $y_i = 1$ if $z_i$, else $y_i = 0$ if $z'_i$
\State Train classifier on $\{z, z'\}, \{y\}$ to learn $p_\theta(D | z)$
\State $p_1, \ldots, p_N \gets p_{\theta}(D|z_1), \ldots, p_{\theta}(D|z_N)$
\State $p'_1, \ldots, p'_N \gets p_{\theta}(D|z'_1), \ldots, p_{\theta}(D|z'_N)$
\State $\tilde\varepsilon_{lb} \gets -\infty$; $\tilde t \gets 0$
\For{$t$ in sort($\{p_{i}\}_1^N \cup \{p'_{i}\}_1^N $)}
    \State $n_1 \gets \sum_i \mathbbm{1}(p_i > t)$
    \State $n_0 \gets \sum_i \mathbbm{1}(p'_i > t)$
    \If{$n_0/N < r$} \Comment{Avoid estimating probs $<r$}
        \State skip to next iter
    \EndIf
    \State $\varepsilon^t_{lb} \gets \mathrm{Eps\_LB}(n_1, n_0, N, \alpha/2)$ \Comment{Lemma \ref{eq:katz}}
    \If{$\varepsilon^t_{lb} > \tilde \varepsilon_{lb}$ }
        \State $\tilde\varepsilon_{lb} \gets \varepsilon^t_{lb}$
        \State $\tilde t \gets t$
    \EndIf
\EndFor
\State \Return $S \gets \{z\ \vert\ p_\theta(D | z) > t\}$
\end{algorithmic}
\end{algorithm}

\textbf{Comparison to other approaches.} 
While our method allows $\mathcal{M}(D)$ to be any vector of information, the previous works auditing DP-SGD require $\mathcal{M}(D)$ to be a single value (e.g. prediction probability or loss) and search for a threshold dividing the score into $S$ and $S^c$ \cite{nasr2021adversary,jagielski2020auditing}. Our method can reduce to this special case by choosing $\tau$ to map to a scalar. The approach can be interpreted as membership inference with the bound $\textit{FN} + e^\varepsilon \textit{FP} \leq 1 - \delta$ from \citet{kairouz2015composition}, where $\{\mathcal{M}(D) \in S\}$ are true positives and $\{\mathcal{M}(D') \in S\}$ are false positives. These works also note that flipping the definition of false positive and false negatives enables testing the complement $\mathbb{P}(\mathcal{M}(D') \in S^c) / \mathbb{P}(\mathcal{M}(D) \in S^c)$. As in those works we select $S^c$ instead of $S$ if it gives higher $\tilde \varepsilon_{lb}$.

\begin{corollary} \label{eq:limit}
Given Algo. \ref{algo:dp} and $N$ samples, the highest detectable privacy risk is $$\ln(N) - z_{\alpha/2}\sqrt{1 - \frac{1}{N}}$$
\end{corollary}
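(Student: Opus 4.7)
The plan is to read off the maximal value of $\tilde\varepsilon_{lb}$ directly from the Katz-log confidence interval used in Algorithm \ref{algo:dp}, by optimizing over the feasible integer pairs $(n_1, n_0)$ with $0 \leq n_1, n_0 \leq N$.

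First I would recall the explicit form of the Katz-log lower bound (Lemma \ref{eq:katz} in the paper): writing $\hat p_1 = n_1/N$ and $\hat p_0 = n_0/N$, the one-sided $(1-\alpha/2)$ lower bound on $\ln(p_1/p_0)$ is
\[
\varepsilon^t_{lb} \;=\; \ln\!\Bigl(\tfrac{\hat p_1}{\hat p_0}\Bigr) \;-\; z_{\alpha/2}\sqrt{\tfrac{1-\hat p_1}{n_1} + \tfrac{1-\hat p_0}{n_0}}.
\]
Next I would argue monotonicity: the log-ratio term is increasing in $n_1$ and decreasing in $n_0$, while the standard-error term $\sqrt{(1-\hat p_1)/n_1 + (1-\hat p_0)/n_0}$ is also decreasing in $n_1$ (since $(1-\hat p_1)/n_1 = (N-n_1)/N n_1$ is decreasing in $n_1$). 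Hence $\varepsilon^t_{lb}$ is increasing in $n_1$, so the supremum is attained at $n_1 = N$. With $n_1 = N$ the standard error collapses to $\sqrt{(1-\hat p_0)/n_0}$, and the bound becomes $\ln(N/n_0) - z_{\alpha/2}\sqrt{(N-n_0)/(N n_0)}$.

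Then I would minimize the remaining expression over integer $n_0 \geq 1$ (values $n_0 = 0$ make the log undefined and are excluded by the min-probability guard $n_0/N < r$ in the algorithm). Differentiating, or simply comparing consecutive integers, shows the function is increasing in $n_0$ on $\{1,\dots,N\}$ for reasonable $\alpha$, so the minimum occurs at $n_0 = 1$. Substituting $n_1 = N$ and $n_0 = 1$ yields $\ln N - z_{\alpha/2}\sqrt{1 - 1/N}$, exactly the stated expression.

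The only subtle step will be justifying the choice $n_0 = 1$ rather than $n_0 = 0$: the latter would formally give $+\infty$ but is ruled out both by the undefined log-ratio and by the algorithmic safeguard that skips thresholds with $n_0/N < r$. I would remark that $n_0 = 1$ is the smallest integer at which the Katz-log bound is well-defined, and hence gives the supremum over all outcomes the algorithm can return. Everything else is a routine substitution.
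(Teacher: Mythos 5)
Your approach is essentially the one the paper uses: identify the extremal observable outcome $(n_1,n_0)=(N,1)$ and substitute it into the Katz-log lower bound, and your form of the standard error, $\sqrt{(1-\hat p_1)/n_1+(1-\hat p_0)/n_0}$, is algebraically identical to the paper's $\sqrt{1/n_1+1/n_0-2/N}$. Your monotonicity argument in $n_1$ is correct and in fact more careful than the paper's one-line justification.

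The $n_0$ step, however, is stated backwards. The objective is to \emph{maximize} the lower bound $g(n_0)=\ln(N/n_0)-z_{\alpha/2}\sqrt{(N-n_0)/(Nn_0)}$ over $n_0\ge 1$, not to minimize it, and $g$ is not increasing on $\{1,\dots,N\}$: it is decreasing near $n_0=1$ (indeed $g(1)\approx\ln N - z_{\alpha/2}$ while $g(N)=0$). Taken literally, ``the function is increasing in $n_0$'' combined with the correct maximization objective would send you to $n_0=N$ and a detectable risk of $0$. The fix is to argue that the dominant $\ln(N/n_0)$ term decreases faster than the standard-error penalty shrinks: going from $n_0=1$ to $n_0=2$ the log term drops by $\ln 2\approx 0.69$ while the subtracted term decreases by only about $z_{\alpha/2}(1-1/\sqrt{2})\approx 0.57$ for $\alpha=0.05$, and for $n_0\ge e^{z_{\alpha/2}}$ one has $g(n_0)\le\ln(N/n_0)<g(1)$ outright; hence $g(1)\ge g(n_0)$ for all admissible $n_0$. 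This does depend on $z_{\alpha/2}$ not being too large (roughly $z_{\alpha/2}\lesssim 2.3$), which is consistent with your ``for reasonable $\alpha$'' caveat and with the paper's $\alpha=0.05$. With that corrected, the substitution $(n_1,n_0)=(N,1)$ yields exactly the stated expression, matching the paper's proof.
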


\textbf{Estimating $\hat\varepsilon_{lb}$.}
After $(D, D', S, k)$ are determined, the final step is to draw $N$
fresh samples of $M(D)$ and $M(D')$ for an independent Monte Carlo verification. As in the search phase, we compute $n_1 = \sum_i \mathbbm{1}(M(D) \in S)$, $n_0 = \sum_i \mathbbm{1}(M(D') \in S)$ and use the lower bound of the Katz log interval to return our final estimate $\hat\varepsilon_{lb}$.

To avoid biasing the confidence interval, it is important that the final MC estimate is conducted on an independent set of samples, as done in \cite{jagielski2020auditing,bichsel2021dp}. This is especially the case when optimizing over thresholds in $S$. Since we compute a $1-\alpha$ interval over $\hat\varepsilon$ for each threshold, using the best result from the same sample would lead to biased inference from multiple testing.

\section{Experiments and Results} \label{sec:results}

\begin{figure*}[ht] 
\begin{subfigure}{0.99\textwidth}
    \centering
    \includegraphics[width=0.24\textwidth, trim=0.2cm 1cm 0.2cm 0cm]{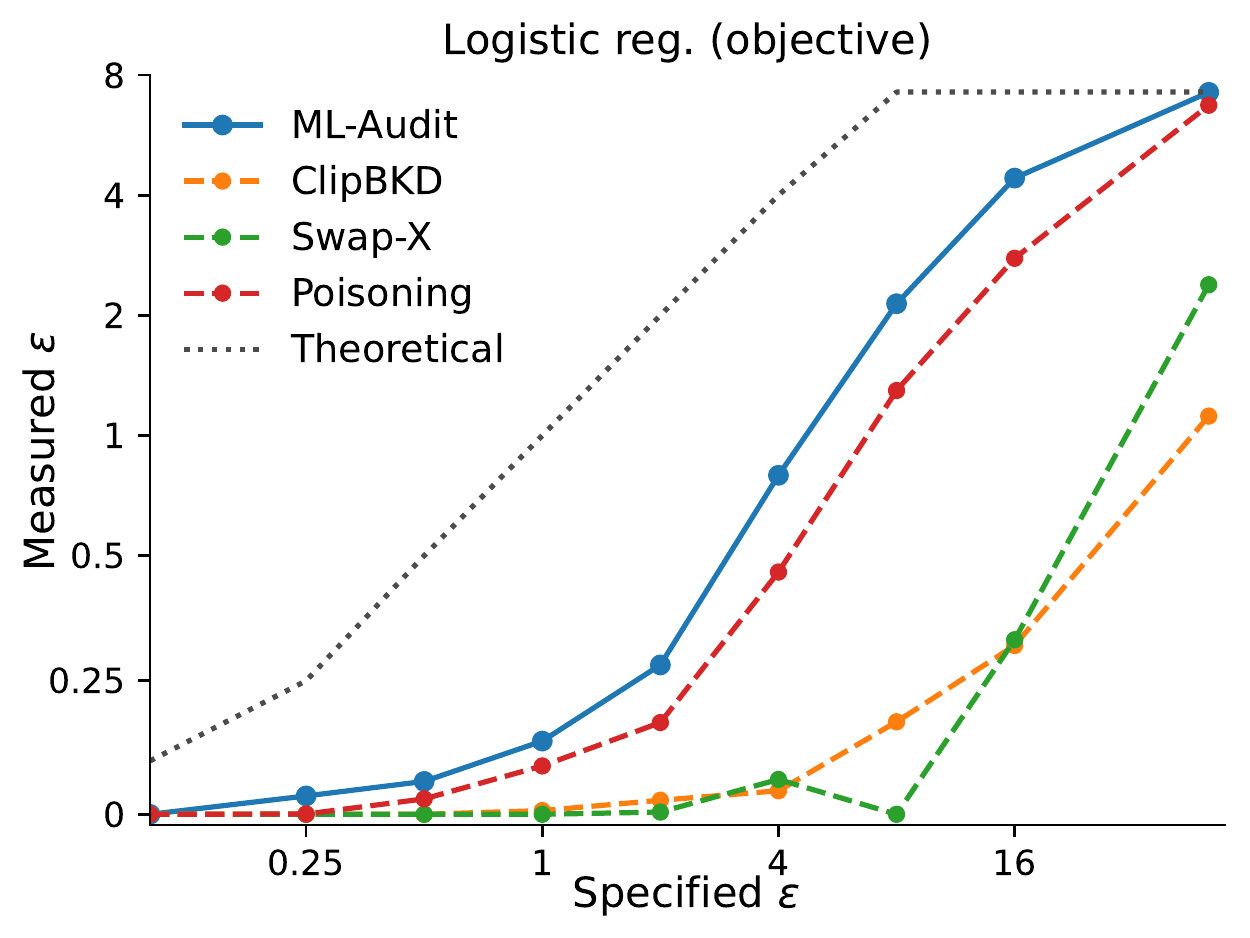}
    \includegraphics[width=0.24\textwidth, trim=0.2cm 1cm 0.2cm 0cm]{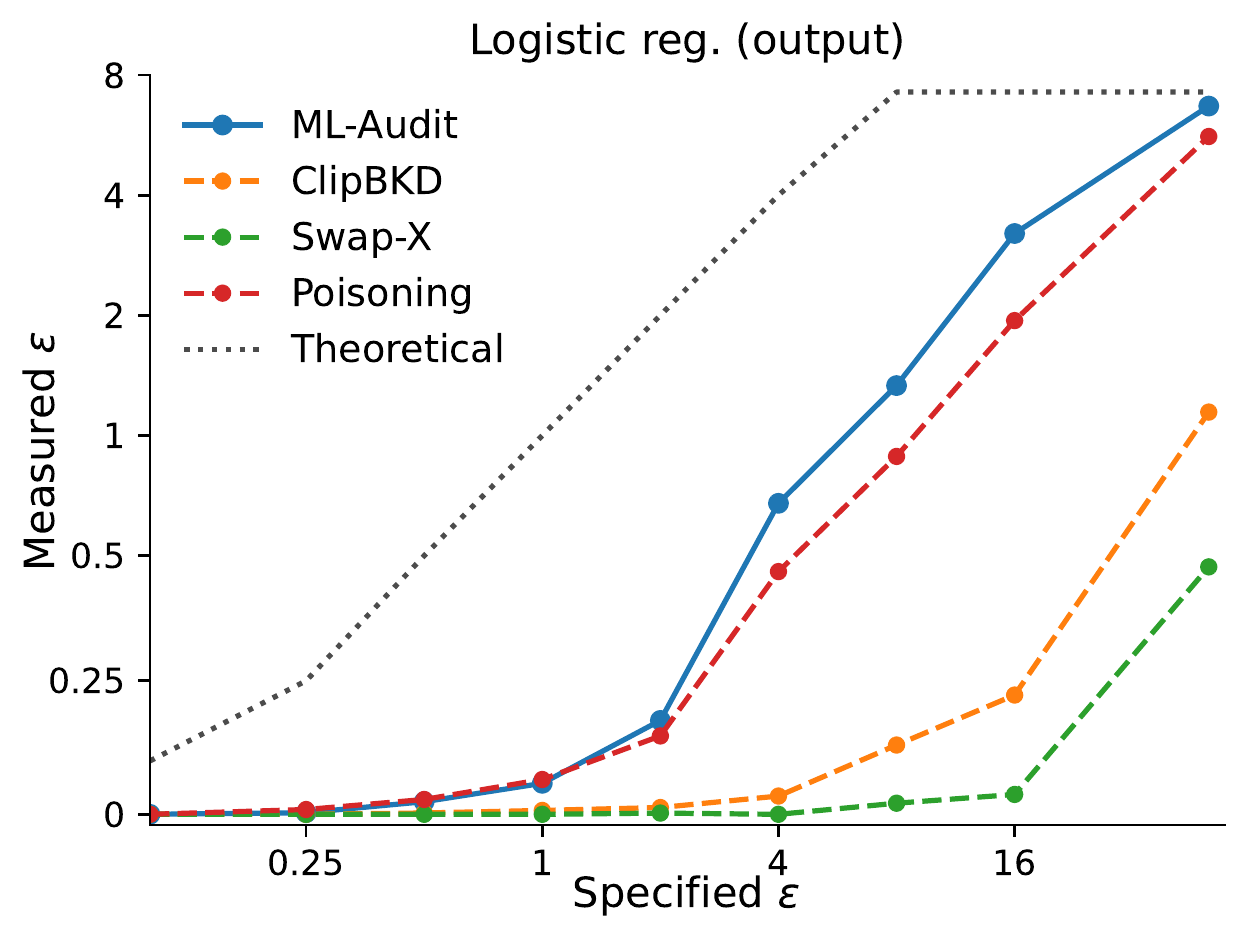}
    \includegraphics[width=0.24\textwidth, trim=0.2cm 1cm 0.2cm 0cm]{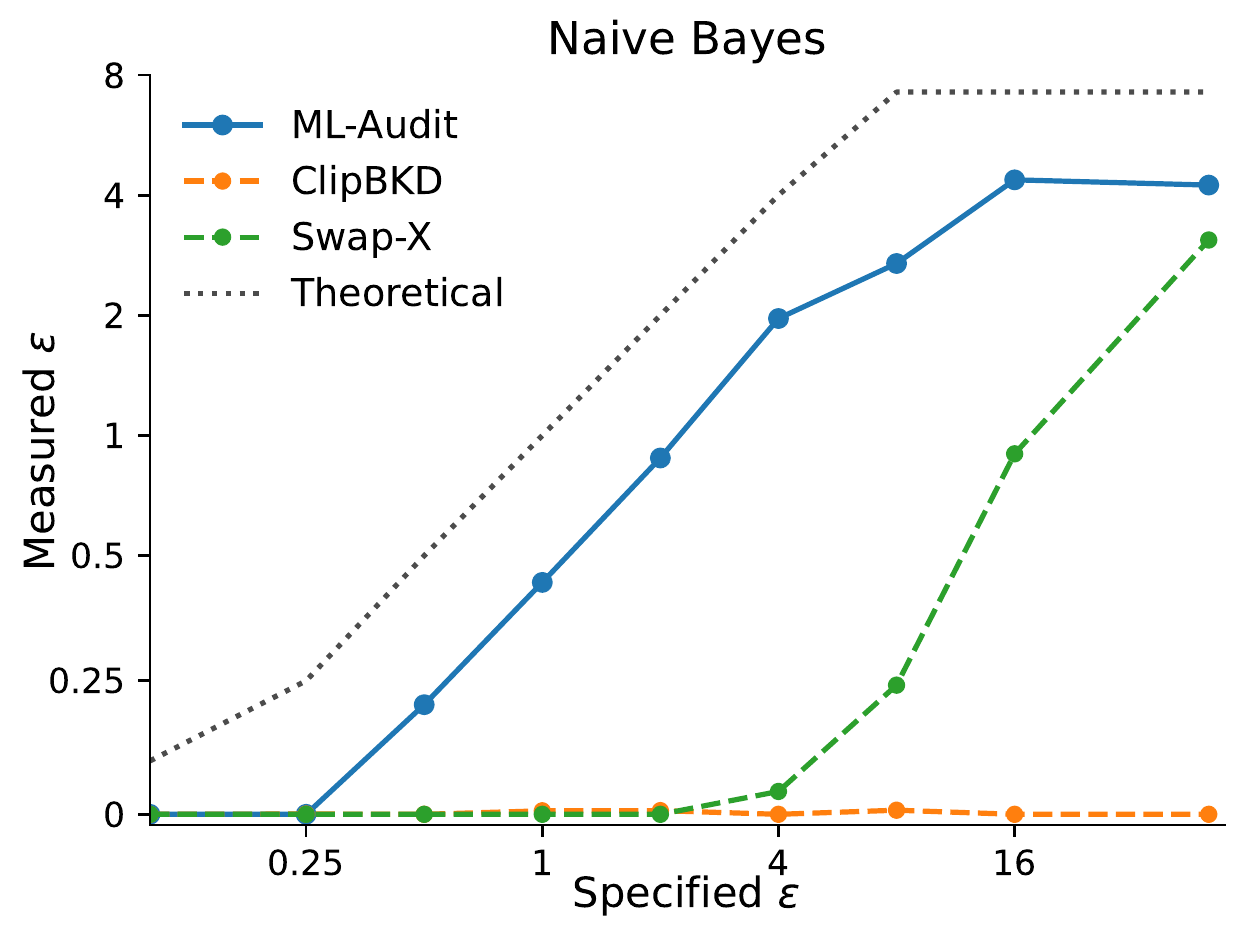}
    \includegraphics[width=0.24\textwidth, trim=0.2cm 1cm 0.2cm 0cm]{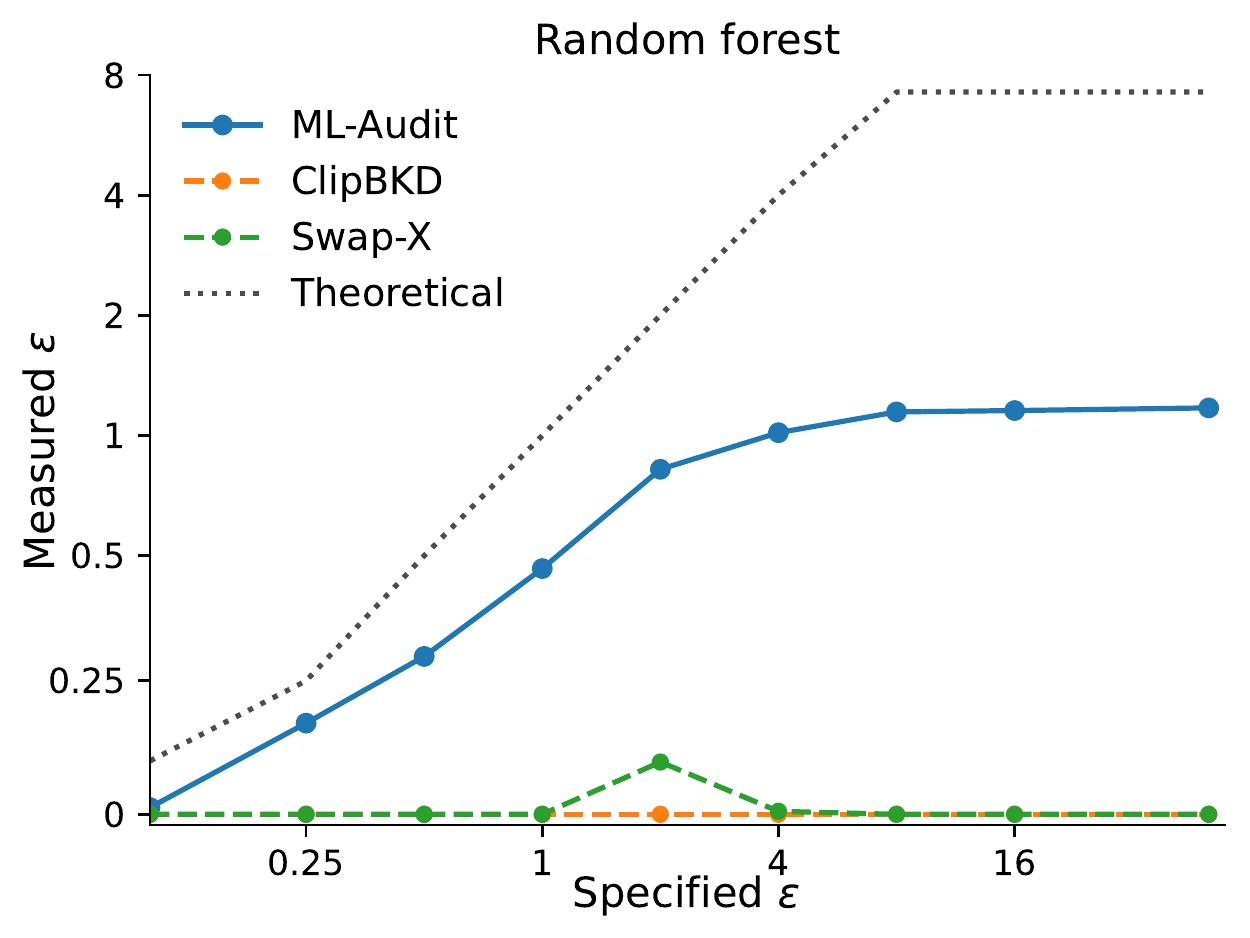}
    \caption[short]{adult dataset}
\end{subfigure}

\begin{subfigure}{0.99\textwidth}
    \centering
    \includegraphics[width=0.24\textwidth, trim=0.2cm 1cm 0.2cm 0cm]{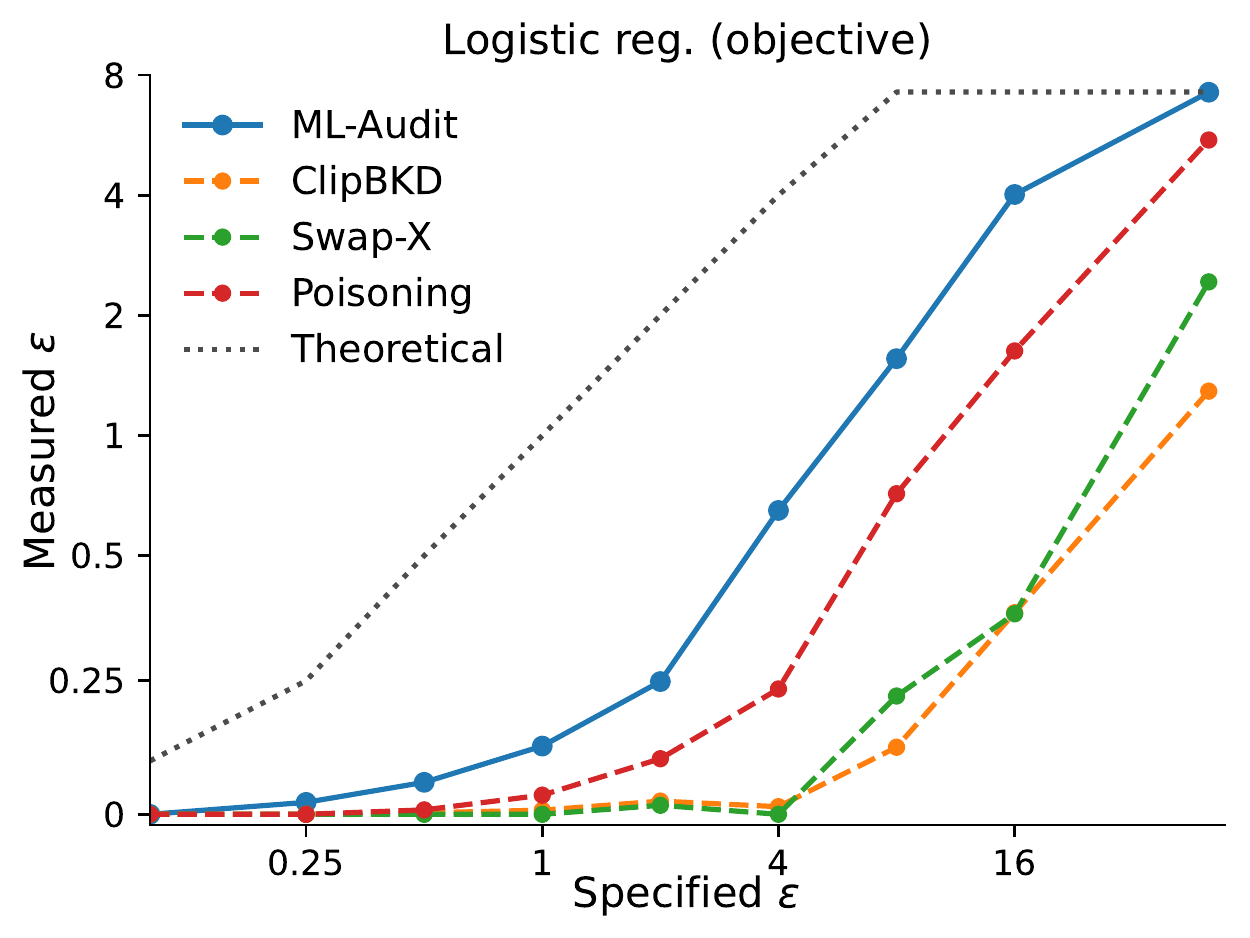}
    \includegraphics[width=0.24\textwidth, trim=0.2cm 1cm 0.2cm 0cm]{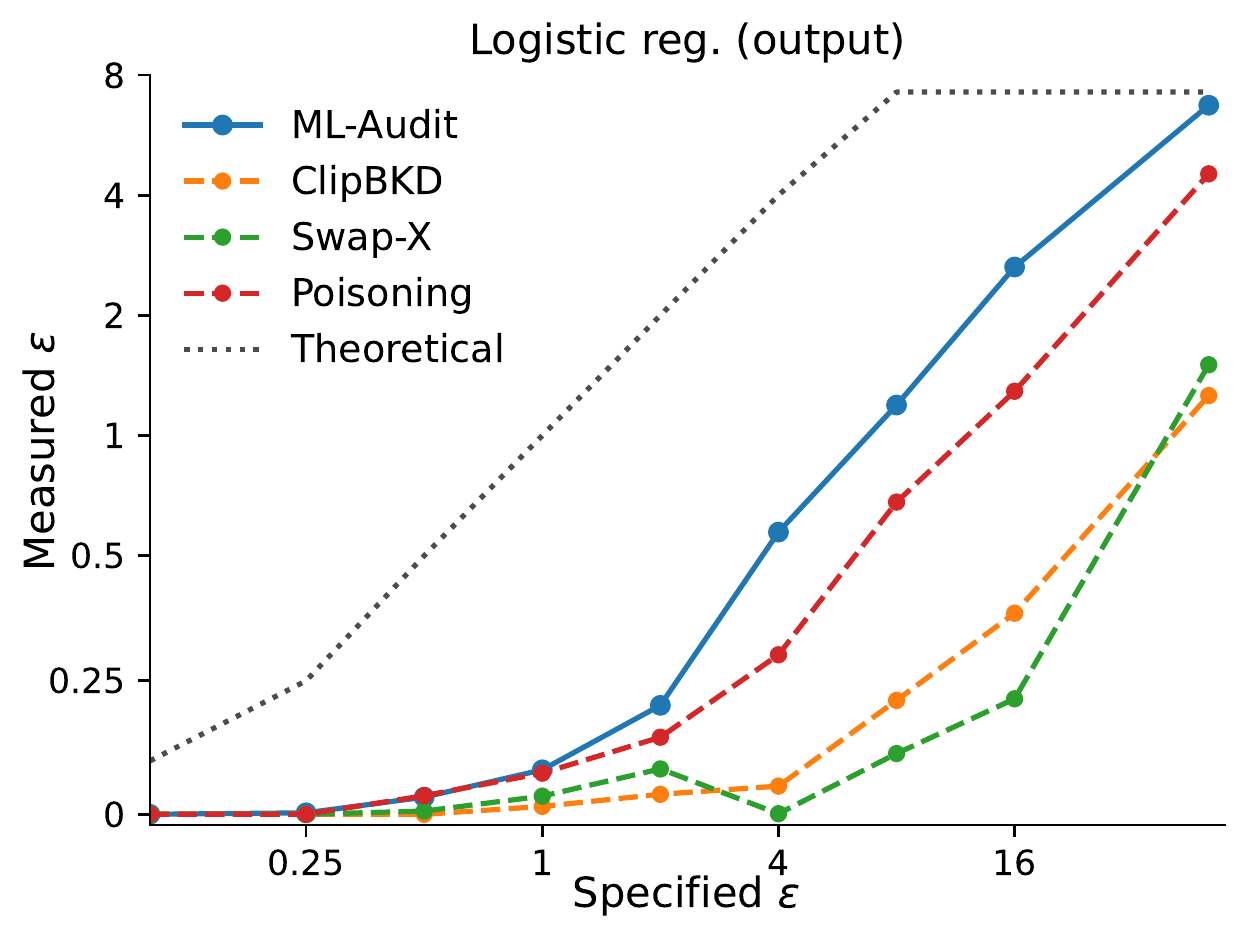}
    \includegraphics[width=0.24\textwidth, trim=0.2cm 1cm 0.2cm 0cm]{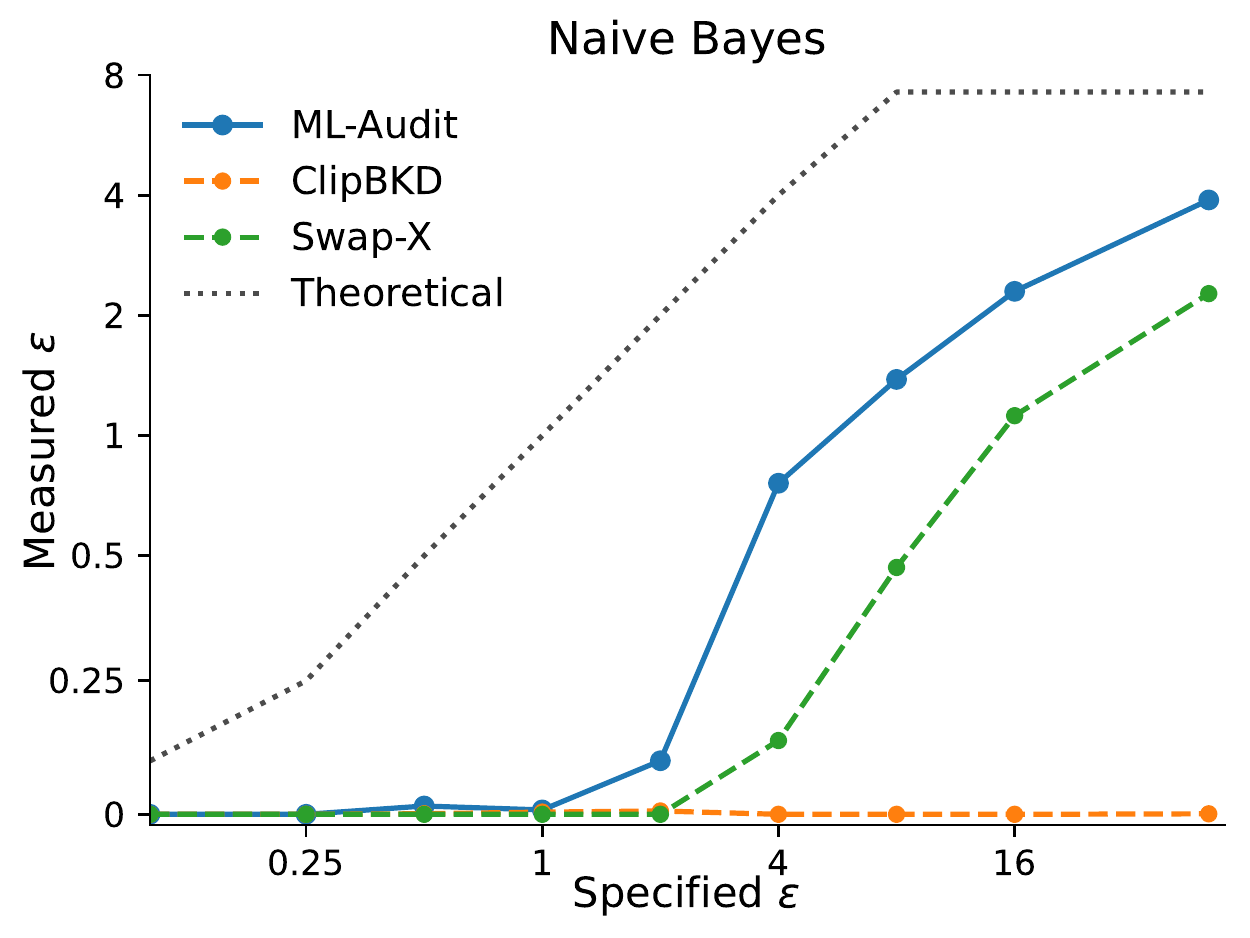}
    \includegraphics[width=0.24\textwidth, trim=0.2cm 1cm 0.2cm 0cm]{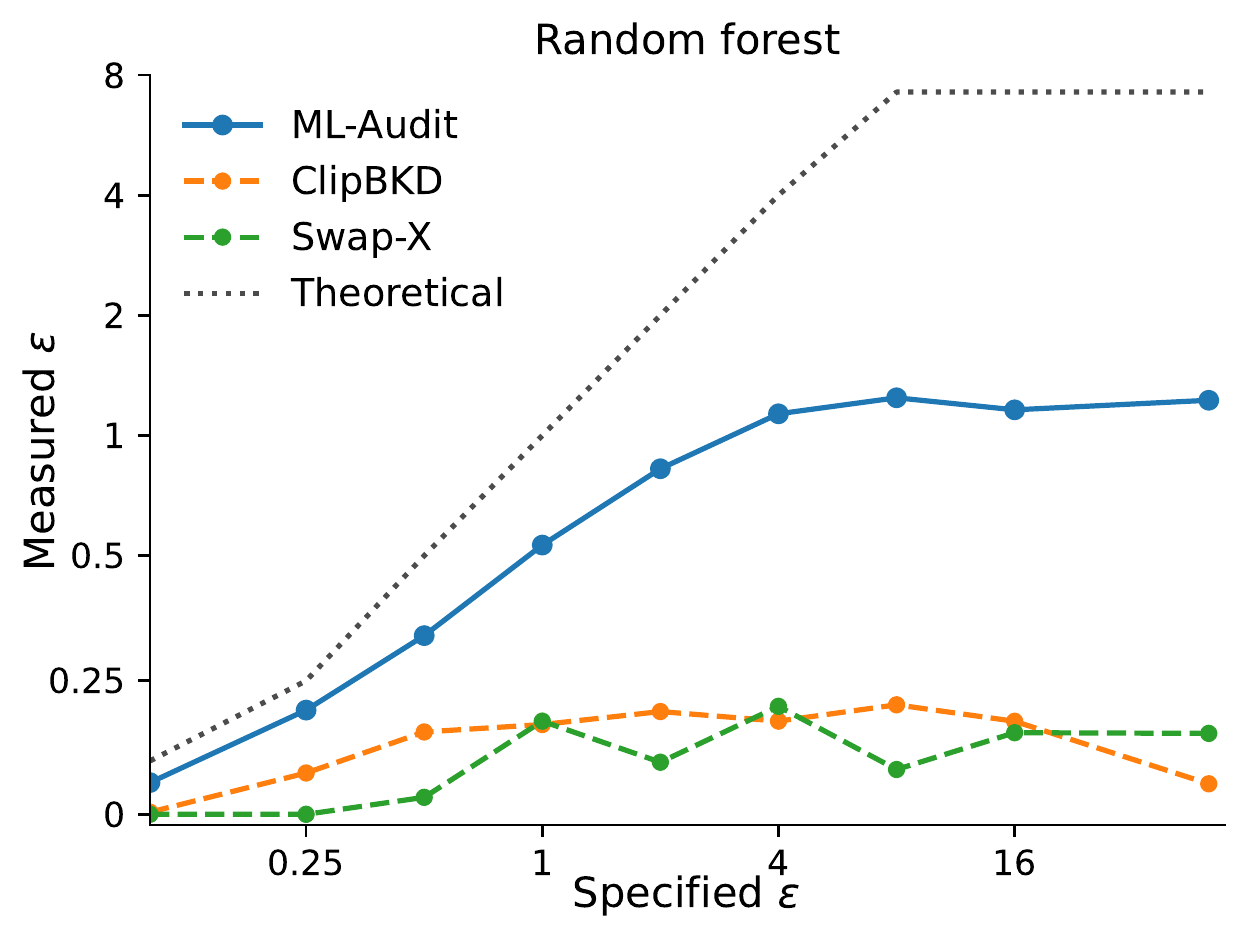}
    \caption[short]{credit dataset}
\end{subfigure}

\begin{subfigure}{0.99\textwidth}
    \centering
    \includegraphics[width=0.24\textwidth, trim=0.2cm 1cm 0.2cm 0cm]{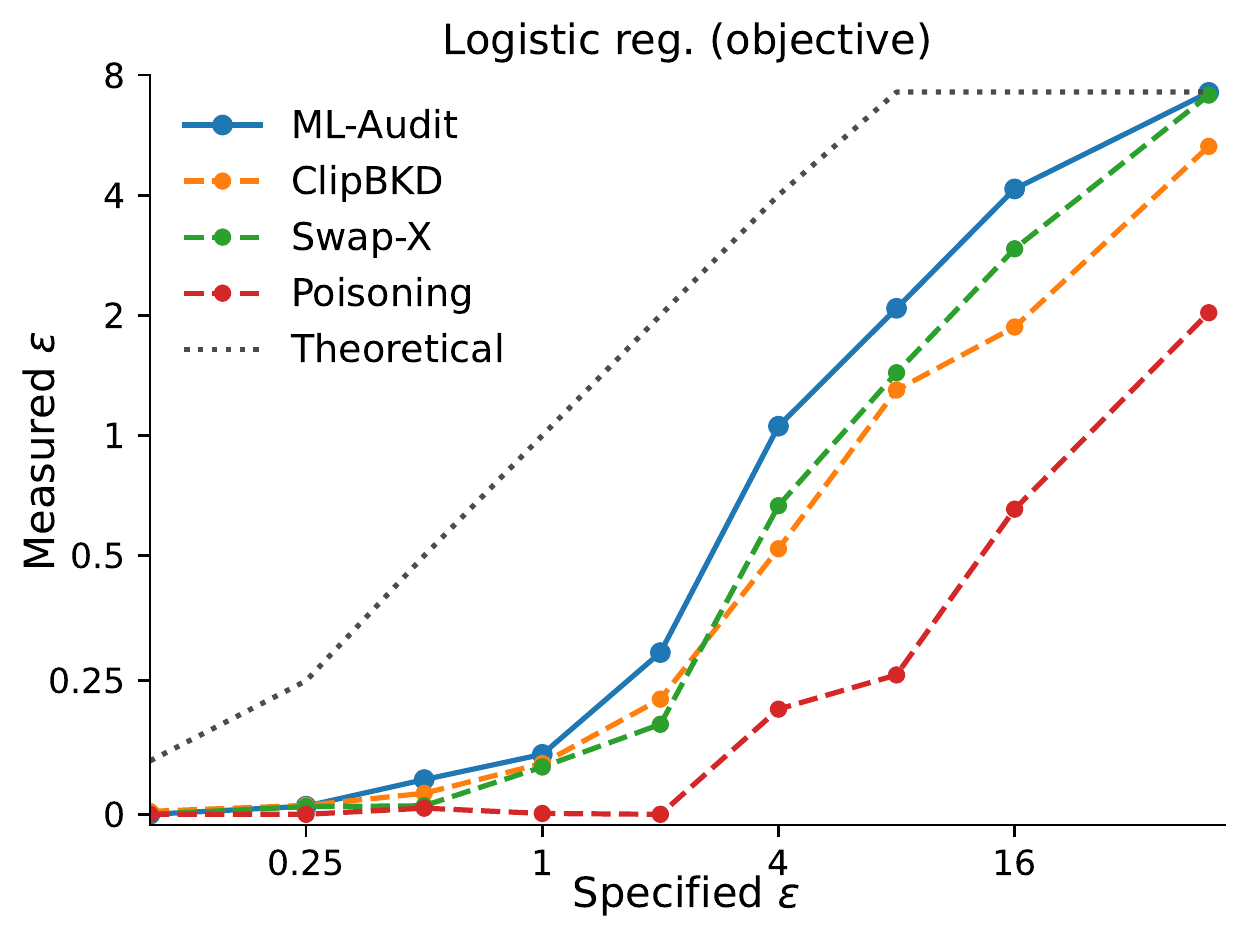}
    \includegraphics[width=0.24\textwidth, trim=0.2cm 1cm 0.2cm 0cm]{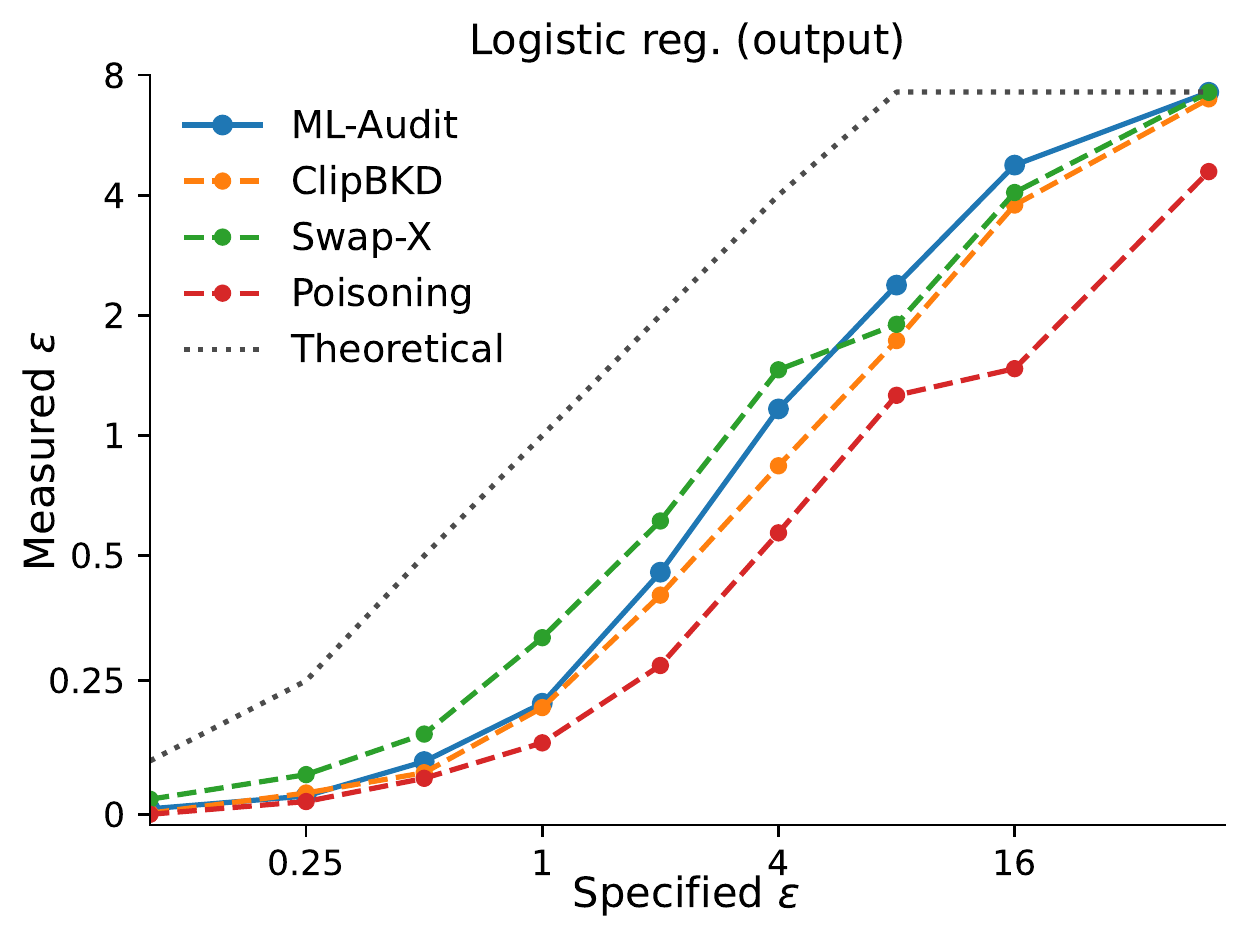}
    \includegraphics[width=0.24\textwidth, trim=0.2cm 1cm 0.2cm 0cm]{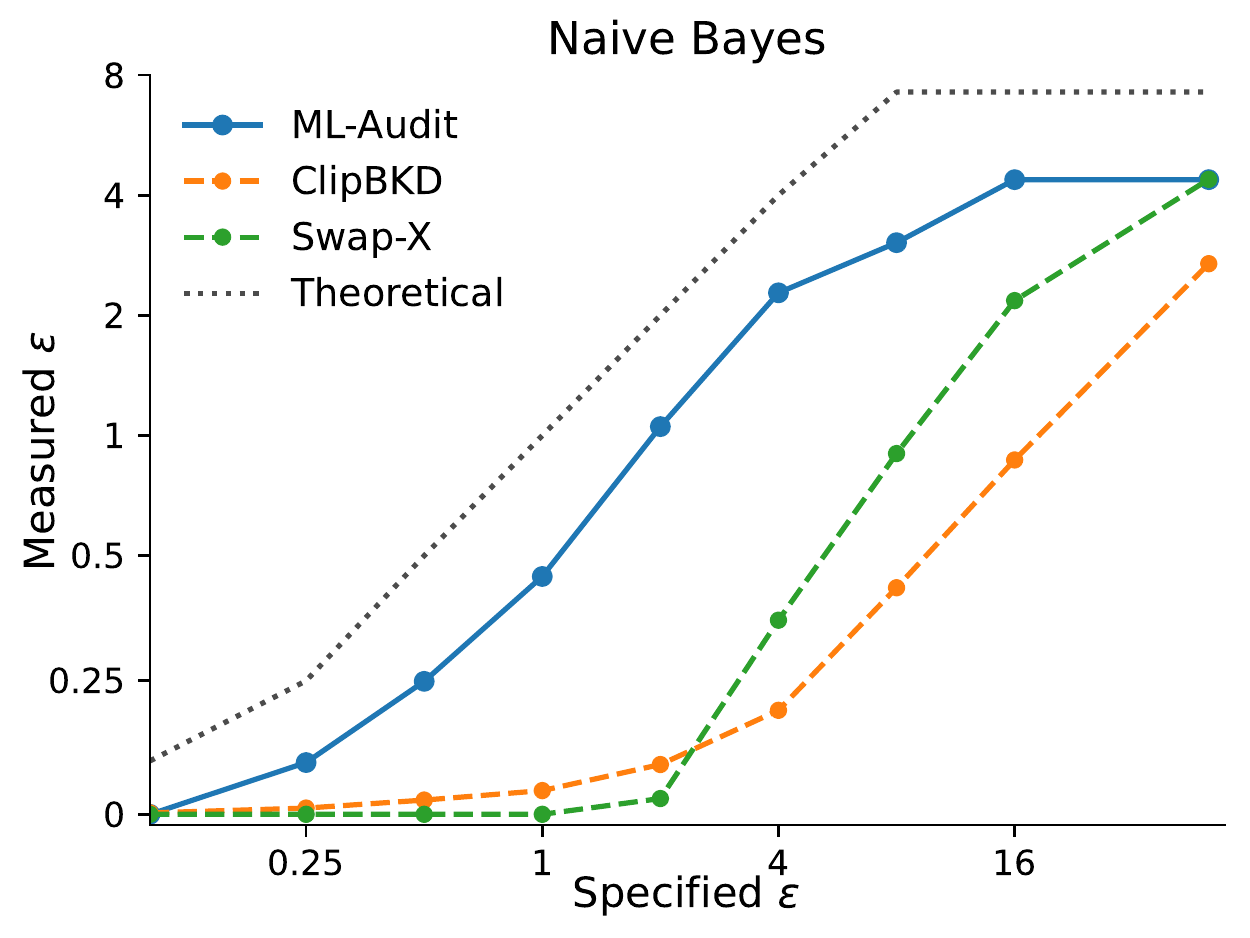}
    \includegraphics[width=0.24\textwidth, trim=0.2cm 1cm 0.2cm 0cm]{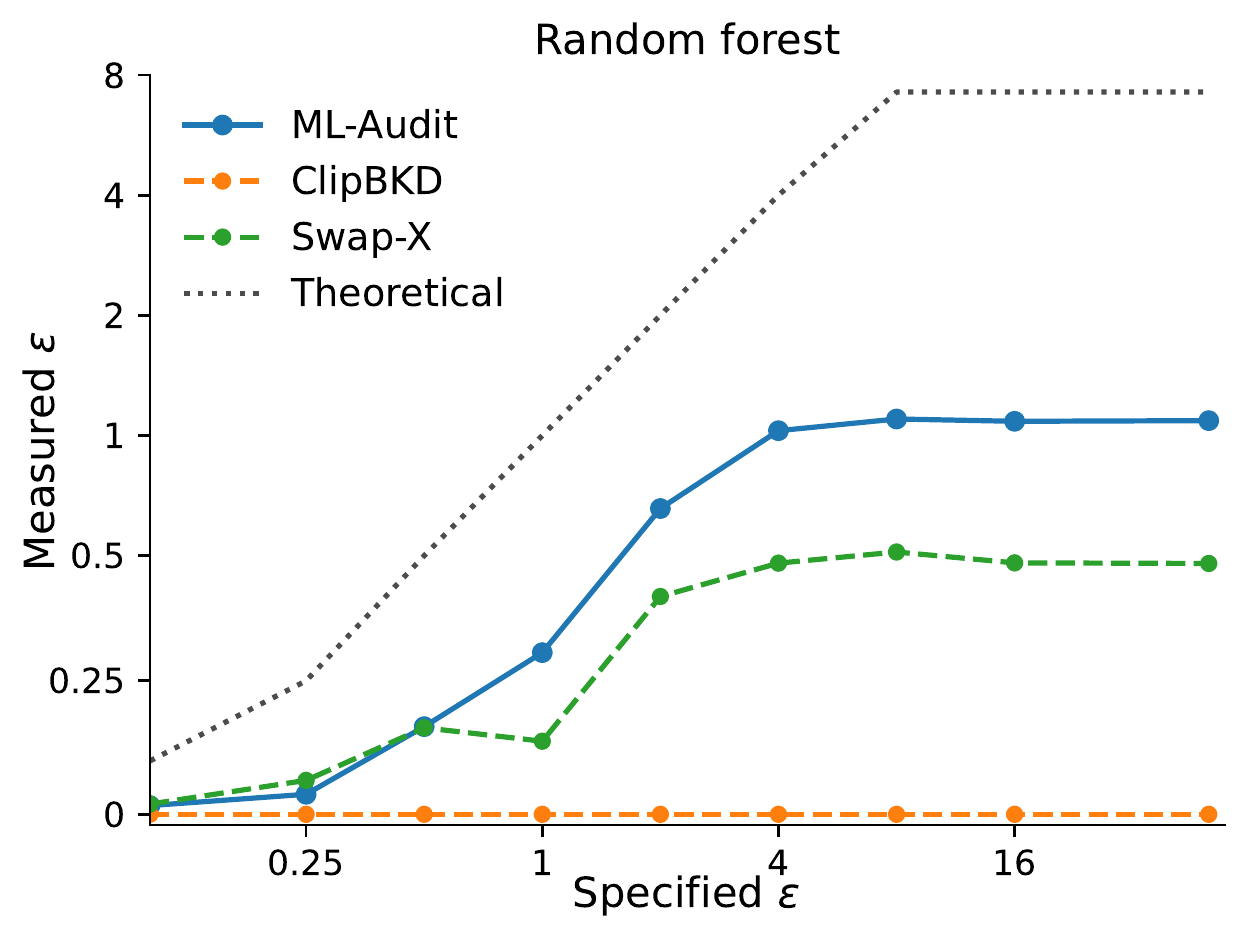}
    \caption[short]{iris dataset}
\end{subfigure}

\begin{subfigure}{0.99\textwidth}
    \centering
    \includegraphics[width=0.24\textwidth, trim=0.2cm 1cm 0.2cm 0cm]{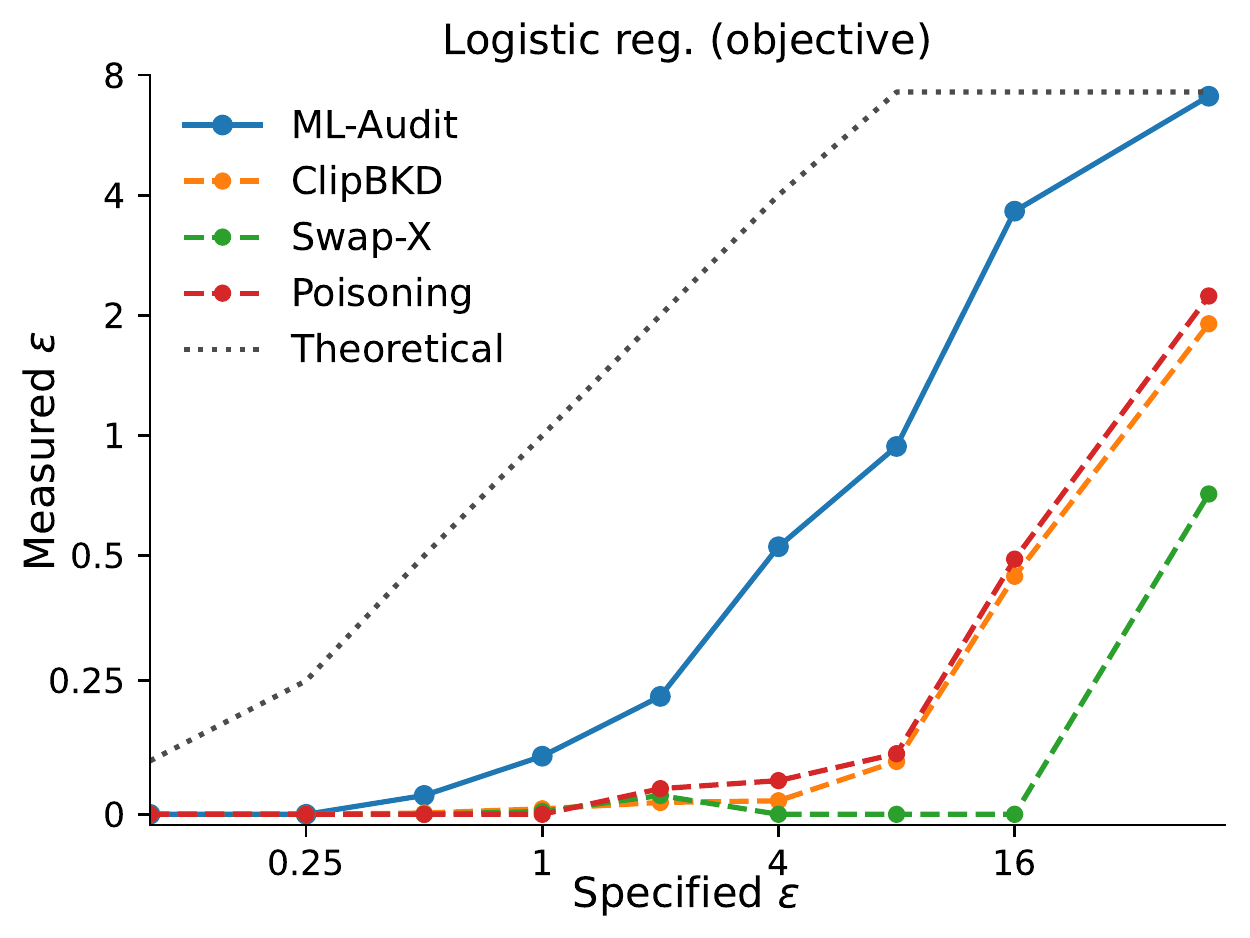}
    \includegraphics[width=0.24\textwidth, trim=0.2cm 1cm 0.2cm 0cm]{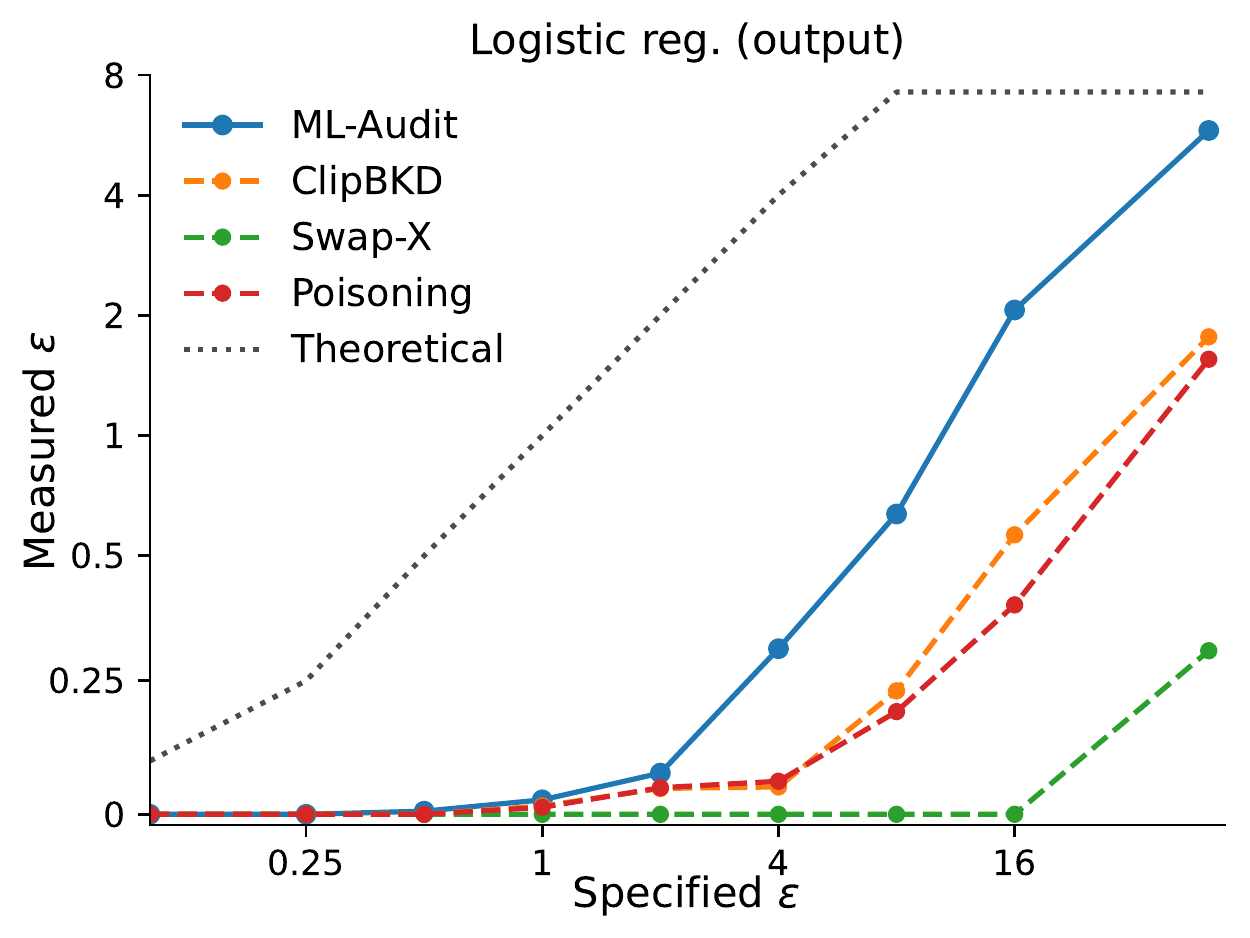}
    \includegraphics[width=0.24\textwidth, trim=0.2cm 1cm 0.2cm 0cm]{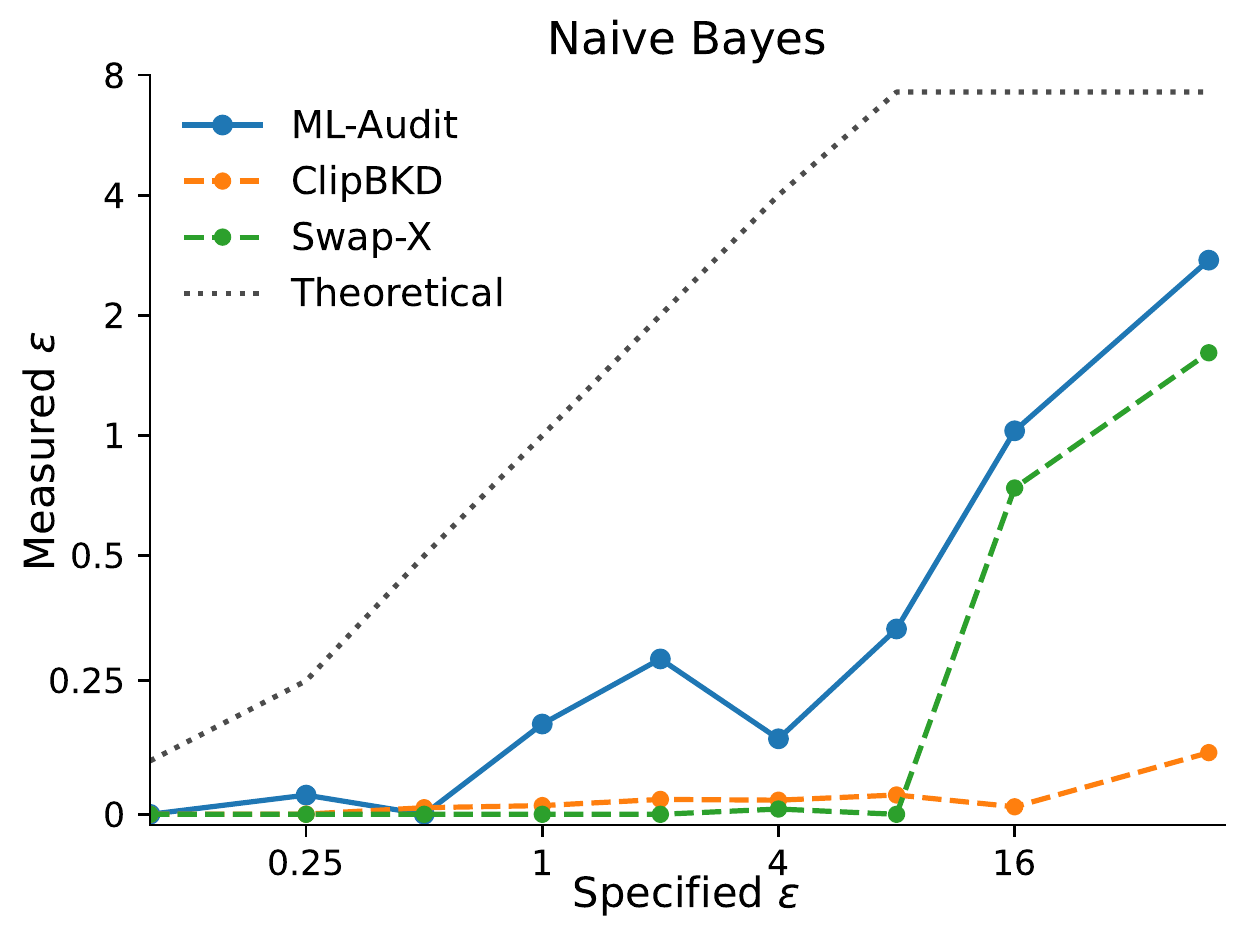}
    \includegraphics[width=0.24\textwidth, trim=0.2cm 1cm 0.2cm 0cm]{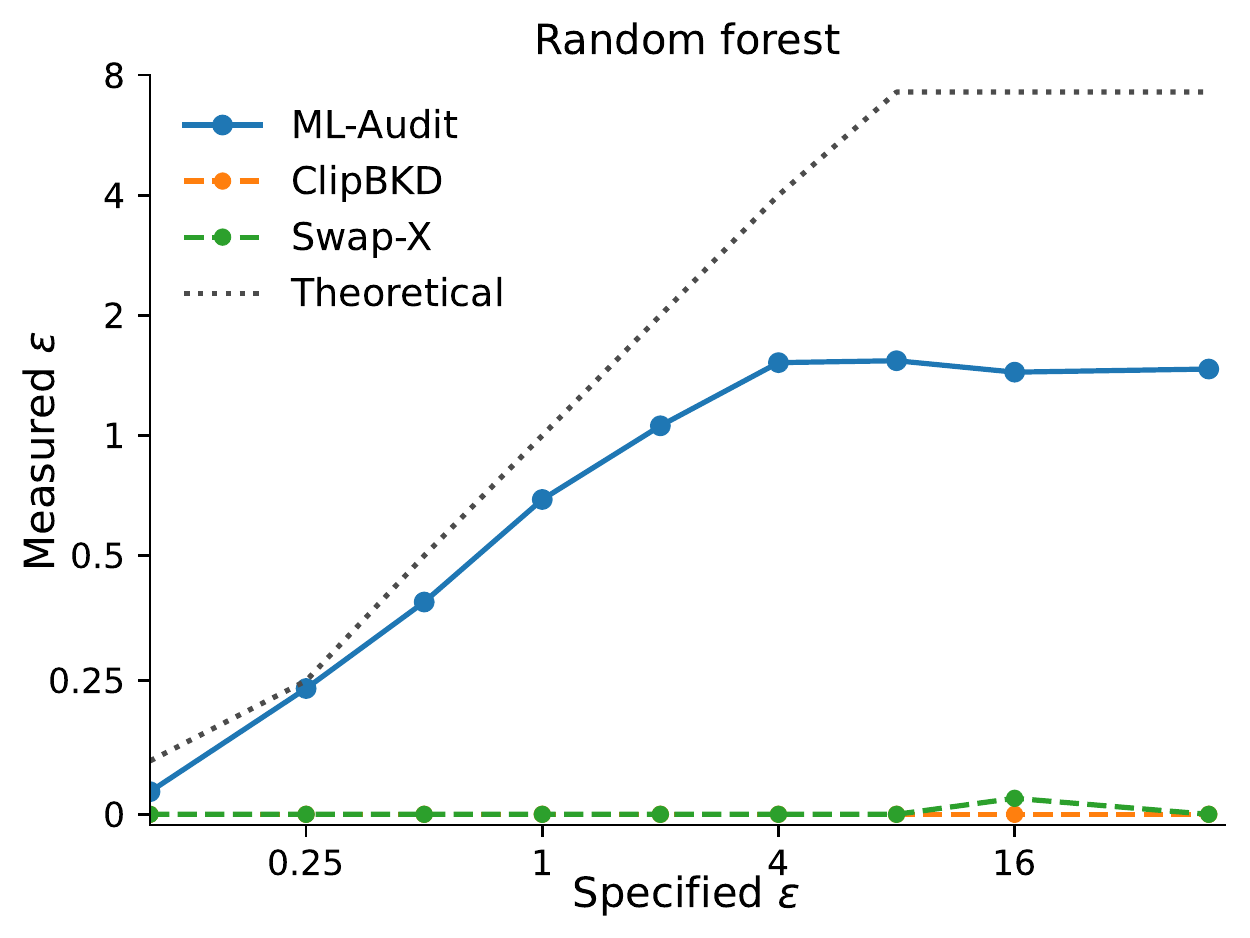}
    \caption[short]{thoracic dataset}
\end{subfigure}
\caption{Specified (x-axis, log-scale) vs detected privacy risk (y-axis, log-scale) using our ML-Audit framework comparing our perturbation attack (blue), ClipBKD (orange) and a row-swap baseline (green), over four datasets. For logistic regression, an additional poisoning attack (red) can be considered an ablation of our approach. Remaining datasets in Appendix \autoref{fig:all_plots}. In highly non-spherical datasets such as iris, row-swap performs comparably to ML-Audit on logistic regression, which we discuss in \ref{sec:results}.}
\label{fig:plots}
\end{figure*}

We evaluate the DP mechanisms over a range of $\varepsilon_{th}$: $\{0.1, 0.25, 0.5, 1, 2, 4, 8, 16, 50\}$. At $\varepsilon_{th}=0.1$ the ratio of probabilities is bounded by $e^{0.1}\approx 1.1$ giving nearly indistinguishable distributions, whereas at $\varepsilon_{th}=50$ essentially no privacy is guaranteed. For a given dataset $D$, we perturb $k \in \{1,2,4,8\}$ points to get $D'$ and train $N=10000$ times for each to determine the appropriate auditing set $S$. Then we obtain $N$ new samples to perform the final Monte Carlo estimate and obtain the lower bound $\hat\varepsilon_{lb}$. We use confidence level $\alpha=0.05$ throughout.

We assess Naive Bayes, logistic regression (output and objective perturbation), and random forest on common machine learning datasets: \textit{adult}, \textit{credit}, \textit{iris}, \textit{breast-cancer}, \textit{banknote}, \textit{thoracic}. We use the diffprivlib library \cite{holohan2019diffprivlib} and implement output perturbation following \cite{chaudhuri2011differentially}. Additionally, we test DP-SGD on FMNIST and CIFAR10 (here with $N=500$) using \cite{opacus}. Refer to Appendix for more details.

Figure \ref{fig:plots} directly compares, over three datasets, the $\hat\varepsilon_{lb}$ obtained by our poisoning attacks (ML-Audit), the ClipBKD attack, and a baseline where a random data point $x$ is changed to a random point from the opposite class without changing labels (Swap-X). (Note that while the original ClipBKD work uses DP-SGD, the poisoning itself is mechanism-agnostic, so it is an appropriate baseline.) We replicated our procedure 3x and show the median $\hat\varepsilon_{lb}$. The Katz log interval was used for the lower bounds even on the baselines. As a reference we plot the theoretical privacy loss up to the maximum detectable limit (Corollary \ref{eq:limit}). 

For the logistic regression experiments we include an additional alternative of our approach where the modified point, rather than maximizing the influence, is optimized to target a specific coefficient vector (labeled Poisoning). The method is based on the coefficient attack of \cite{ma2019data}, a gradient-based optimization approach with similar form to ours. Heuristically, we want the target coefficient vector to be as distinguishable as possible from the original optimal linear coefficients, so we constructed a perpendicular vector to the original coefficients using cross products. This method, which only exists for logistic regression, can be considered an ablation of ML-Audit with a weaker attack.

Our attacks consistently and often dramatically outperform baselines across datasets, models, and $\varepsilon_{th}$. In some cases, the privacy detection is within a small factor $(1-2\times)$ of optimal. The Poisoning attack is close to our attacks in some datasets but performs poorly in others. The Swap-X baseline shows the expected privacy loss of an attack which poisons a data point within the original data distribution, rather than a worst-case perturbation (e.g. swapping two points in the training set).
The only case where Swap-X performs similarly to our tailored attacks is in output-perturbed logistic regression on separable datasets with high non-sphericity such as \textit{iris} and \textit{breast-cancer}. We believe these properties enable simple strategies to work well (see Fig. \ref{fig:sphere}a). We believe this is not a meaningful difference in performance as both methods are near the limit of what can be theoretically detected.
Our results validate previous findings showing that estimating $\varepsilon$ using membership inference-based bounds are weaker than poisoning attacks \cite{nasr2021adversary,jagielski2020auditing}. We also tried the loss-threshold attack of \cite{yeom2018privacy,jayaraman2019evaluating} and found it ineffective.

On many models and datasets we obtain nearly optimal results, indicating that these datasets are close to the maximal sensitivity of the mechanism. We observe this for random forest at $\varepsilon_{th} \leq 2$ but also a plateau in higher $\varepsilon_{th}$. This is likely due to lack of resolution in the outputs: an average over $m$ tree decisions only takes $m$ unique probability values. Our attack on Naive Bayes is also near-optimal, while varying among datasets. Since the algorithm assigns $\varepsilon_{th}/3$ to protect each of $\mu_{yd}$, $\sigma^2_{yd}$, and $\pi_y$, a perturbation needs to achieve the sensitivity bound on all three to reach the theoretical privacy risk, with the bound over all pairs $D, D'$ rather than conditioning on a starting $D$. For example, to reach the bound for $\mu$, all points of one class must be in one corner of the hyper-rectangle and the poisoned point on the other corner, which is unrealistic. For this reason, our choice of attacking the class prior $\pi$ is generally more effective than attacking the mean $\mu$ (Fig. \ref{fig:ablate}).

Lastly, our DP-SGD experiments assess the two existing perturbation methods of \cite{jagielski2020auditing,nasr2021adversary}. Our findings (Fig. \ref{fig:dpsgd}) are consistent with what is reported in those works. Interestingly, we find that the shadow adversarial attack outperforms ClipBKD on FMNIST and vice versa on CIFAR10. We also updated ClipBKD to use additional information in $\tau$ as detailed in Section $\ref{sect:D_opt}$ and find a moderate improvement on FMNIST. Refer to the Appendix for further discussion.

\textbf{Dataset-specific attack comparisons.} Previous work hypothesized that privacy attacks on realistic datasets do not cause the model shift to reach the worst-case given by the sensitivity bound, except when the attacker can completely specify the dataset \cite{nasr2021adversary}. Based on our observations on the impact of the dataset distribution on attack strength, our evidence supports the hypothesis that achievable privacy is dataset-dependent. We believe theoretical analysis of datasets for attack risk to be useful for future work.

\begin{figure}[!h]
    \centering
\begin{subfigure}{0.44\columnwidth}
    \includegraphics[width=0.97\columnwidth, trim=0.9cm 0.5cm 0.6cm 0.8cm]{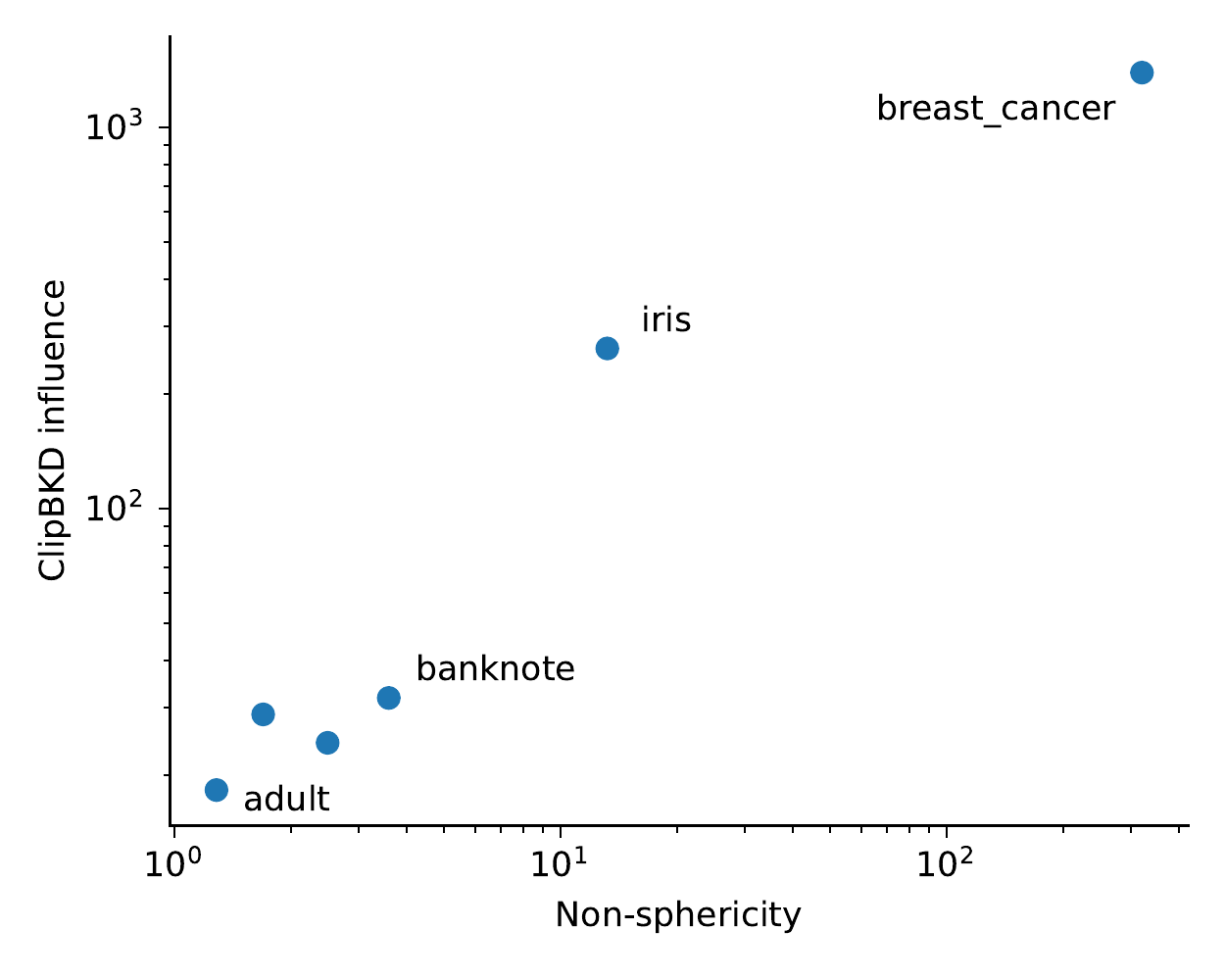}
\end{subfigure}
\begin{subfigure}{0.47\columnwidth}
    \includegraphics[width=0.99\columnwidth, trim=0.5cm 1.5cm 1cm 1cm]{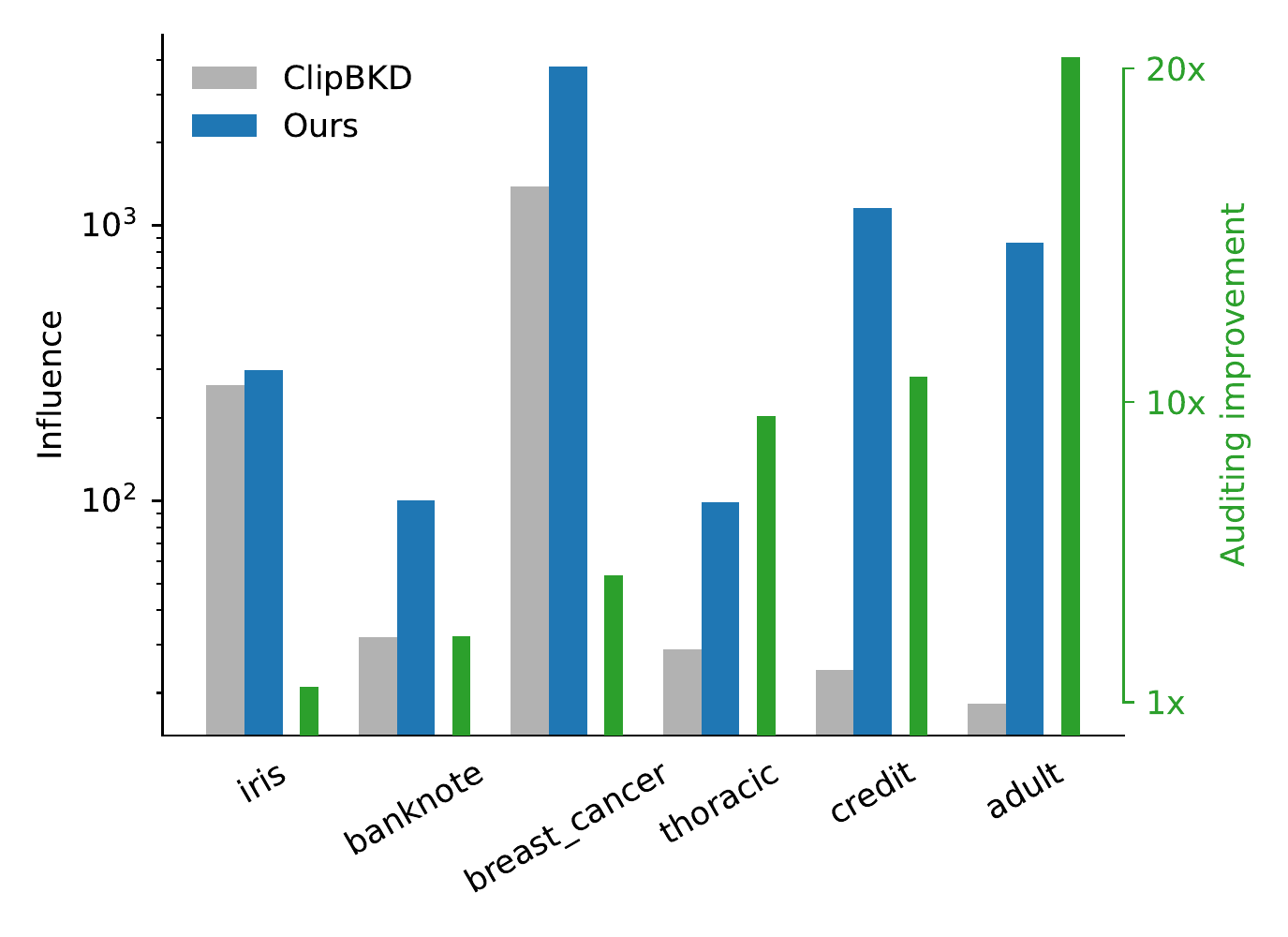}
\end{subfigure}
\caption{(a) Non-sphericity vs ClipBKD influence  (b) Influence of ClipBKD perturbation compared to our perturbation explains the difference in performance across datasets.}
\label{fig:sphere}
\end{figure}

We briefly investigate this phenomenon by comparing the performance of ClipBKD by dataset sphericity, as computed by ratio between largest and smallest singular values. An attack on the direction of least variance is likely ineffective when the dataset is spherical. We observe a strong relationship between the non-sphericity of a dataset and the influence of the ClipBKD perturbed point in Fig. \ref{fig:sphere}a. In comparison our influence-based attack consistently finds a higher influence value (Fig. \ref{fig:sphere}b). Furthermore, this difference in influence is directly linked to the auditing improvement of our approach compared to ClipBKD.

\textbf{Detecting violations in privacy}. We discuss two case studies where privacy leaks occur in practice and are detected by our method. First, the Naive Bayes mechanism in \textit{diffprivlib} exposes perturbed class counts in the API. However, the sum of class counts is enforced to be the dataset size. This means whenever $(D, D')$ differ by one row in length no privacy is guaranteed. When we include the exposed class counts as part of $\tau$, our framework detects maximal privacy loss at all $\varepsilon_{th}$ (Fig. \ref{fig:ablate}a). We therefore recommend that only the class priors and not counts be made accessible.

As another example, in some works DP defines neighboring datasets can only have a single row addition or deletion, while others allow modifiying an existing point (equivalently deleting and then adding a row). This means that depending on the implementation, the actual privacy level may be half or double what the user desires to obtain. Our auditing framework correctly detected this discrepancy in the DP random forest, which defines $\varepsilon$ under the first option. Our estimated $\hat\varepsilon$ using the second definition exceeded the theoretical level until the definitions were reconciled.

\begin{figure}[!h]
    \centering
\begin{subfigure}{0.47\columnwidth}
    \includegraphics[width=0.99\columnwidth, trim=0.2cm 0.2cm 0.2cm 0.2cm]{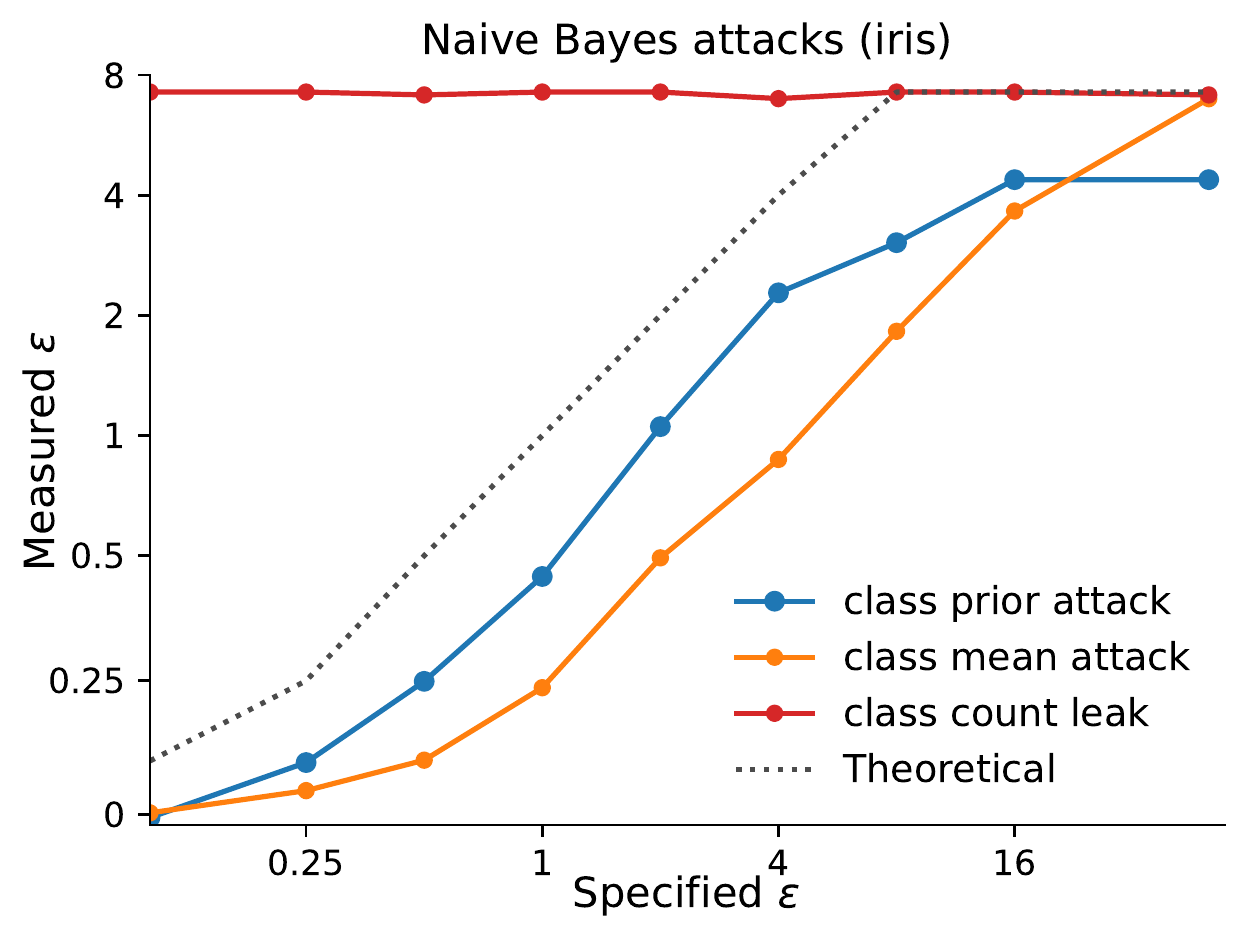}
\end{subfigure}
\begin{subfigure}{0.47\columnwidth}
    \includegraphics[width=0.99\columnwidth, trim=0.2cm 0.2cm 0.2cm 0.2cm]{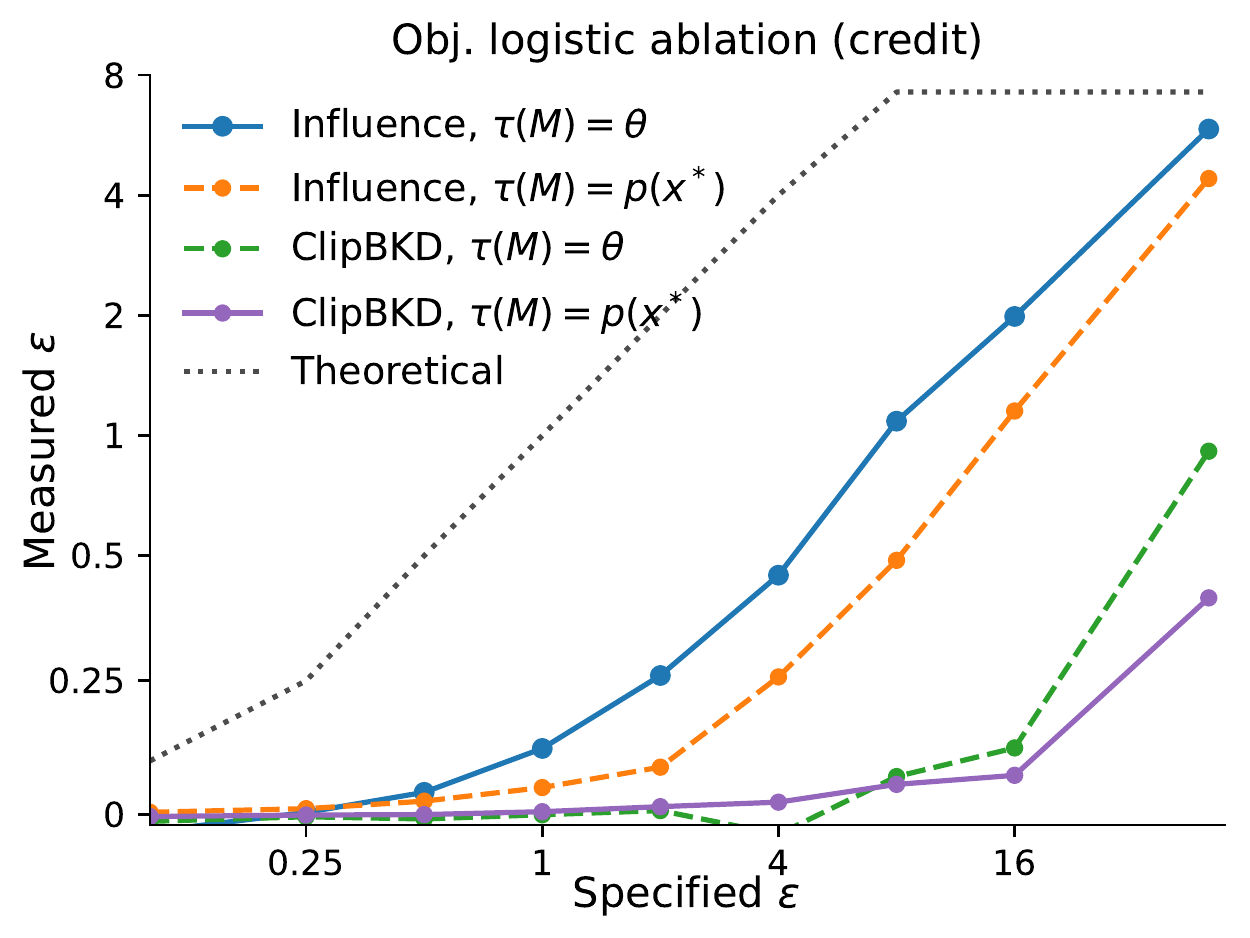}
\end{subfigure}
 \caption{(a) Naive Bayes: Optimal attack on $\pi$ (blue) compared to attacking $\mu$ (orange). However, when $D, D'$ differ in length and class counts from the API are obtained, all privacy is compromised (red). (b) Logistic regression ablation showing incremental improvements when upgrading the baseline ClipBKD (purple) to our approach (blue).}
\label{fig:ablate}
\end{figure}

We reiterate that the advantages of our approach over precursors are (1) improved perturbation attacks on $D$, and (2) optimizing a likelihood attack $S$ on any summary function $\tau$ of a model $\mathcal{M}$. Our final ablation in Fig. \ref{fig:ablate}b highlights these strengths by demonstrating the incremental improvement to ClipBKD when either (1) or (2) are added.

\section{Conclusion} \label{sec:conclusion}

We have proposed ML-Audit, a framework for estimating the differential privacy of a ML model. ML-Audit provides a recipe for devising audits against arbitrary models and is often orders of magnitude more effective than existing approaches -- sometimes $\leq 2\times$ of theoretical optimum. 

\section{Acknowledgments}

Approved for Public Release; Distribution Unlimited. PA \#: AFRL-2022-3247. 

\FloatBarrier

\bibliographystyle{unsrtnat}
\bibliography{refs}

\FloatBarrier

\clearpage
\onecolumn

\appendix

\section{Mathematical claims and proofs}

\begin{lemma}
Suppose $\mathcal{M}$ is $(\varepsilon, \delta)$-private and $D, D'$ differ by $k$ rows. Then for all measurable $S \subset \Theta$,
\begin{equation} \label{eq:group}
    \mathbb{P}(\mathcal{M}(D) \in S) \leq e^{k\varepsilon} \mathbb{P}(\mathcal{M}(D') \in S) + \delta \cdot \frac{1 - e^{k\varepsilon}}{1 - e^{\varepsilon}}
\end{equation}
\end{lemma}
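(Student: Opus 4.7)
The plan is to prove Lemma~\ref{lem:group} by induction on $k$, using the standard trick of constructing a chain of datasets $D = D_0, D_1, \ldots, D_k = D'$ such that consecutive $D_{i-1}, D_i$ differ in exactly one row. Since $D$ and $D'$ differ in $k$ rows, such a chain exists: we can swap one differing row at a time. Each adjacent pair then satisfies the single-step $(\varepsilon,\delta)$-DP guarantee from Equation~\eqref{eq:dp}.

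The base case $k=1$ is immediate from the DP definition, and the coefficient $\frac{1-e^{\varepsilon}}{1-e^{\varepsilon}}=1$ reduces the right-hand side exactly to $e^{\varepsilon}\mathbb{P}(\mathcal{M}(D') \in S) + \delta$. For the inductive step, I assume the result holds for $k-1$, i.e.
\[
\mathbb{P}(\mathcal{M}(D_0)\in S) \;\leq\; e^{(k-1)\varepsilon}\,\mathbb{P}(\mathcal{M}(D_{k-1})\in S) \;+\; \delta\cdot \frac{1 - e^{(k-1)\varepsilon}}{1 - e^{\varepsilon}},
\]
and apply the single-step DP bound to $\mathbb{P}(\mathcal{M}(D_{k-1})\in S) \leq e^{\varepsilon}\mathbb{P}(\mathcal{M}(D_k)\in S) + \delta$. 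Substituting yields a leading term of $e^{k\varepsilon}\mathbb{P}(\mathcal{M}(D_k)\in S)$ plus a slack contribution of $\delta\bigl(e^{(k-1)\varepsilon} + \tfrac{1-e^{(k-1)\varepsilon}}{1-e^{\varepsilon}}\bigr)$.

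The remaining step is a short algebraic verification that this slack collapses to the claimed geometric factor $\tfrac{1-e^{k\varepsilon}}{1-e^{\varepsilon}}$. Combining over a common denominator, $e^{(k-1)\varepsilon}(1-e^{\varepsilon}) + (1-e^{(k-1)\varepsilon}) = 1 - e^{k\varepsilon}$, which gives exactly the target coefficient. This closes the induction.

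I do not expect any real obstacle here: the chain-of-neighbors construction is standard, and the only nontrivial piece is recognizing that iterating the $\delta$ slack term yields a geometric series in $e^{\varepsilon}$. The mild subtlety worth flagging is that $\mathcal{M}$ need only be $(\varepsilon,\delta)$-DP on single-row neighbors, and that each $D_i$ in the chain must be a valid element of $\mathcal{D}$ so that the hypothesis applies at each step; this is automatic under the usual setting where $\mathcal{D}$ is closed under row replacement.
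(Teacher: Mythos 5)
Your proof is correct and follows essentially the same route as the paper's: both construct a chain of intermediate datasets differing by one row, iterate the single-step $(\varepsilon,\delta)$ bound along the chain, and collapse the accumulated $\delta$-slack via the geometric series identity $\sum_{j=0}^{k-1} e^{j\varepsilon} = \frac{1-e^{k\varepsilon}}{1-e^{\varepsilon}}$. Packaging the iteration as a formal induction and flagging that each intermediate $D_i$ must lie in $\mathcal{D}$ are minor presentational refinements, not a different argument.
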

\begin{proof}
Let $D_1, D_2, \ldots, D_{k-1}$ represent intermediate datasets between $D$ and $D'$ changing one row at a time. For each step Definition 3.2 holds. For example 
 \begin{align*}\mathbb{P}(\mathcal{M}(D) \in S) &\leq e^{\varepsilon} \mathbb{P}(\mathcal{M}(D_1) \in S) + \delta\\
 &\leq e^{\varepsilon} \big( e^{\varepsilon} \mathbb{P}(\mathcal{M}(D_2) \in S) + \delta\big) + \delta\\
 &\leq \ldots
 \end{align*}
Continuing the iteration, we get 
$$ \mathbb{P}(\mathcal{M}(D) \in S) \leq e^{k\varepsilon} \mathbb{P}(\mathcal{M}(D') \in S) + \delta \cdot \sum_{j=0}^{k-1} e^{j\varepsilon}$$
For $\varepsilon > 0$, the final sum is a geometric series. Applying the standard identity gives the final result.
\end{proof}

\begin{lemma}
(Neyman-Pearson): Given observations of a variable $X$, consider a hypothesis test distinguishing $H_0: \theta = \theta_0$ and $H_1: \theta = \theta_1$ with respective densities $f_0(x)$ and $f_1(x)$. For a level of significance $\alpha$, there exists a test with rejection region $R$ and $t\geq 0$ such that
\begin{enumerate}
    \item $P_0(X \in R) = \alpha$
    \item $X \in R$ if $f_1(x)/f_0(x) > t$
    \item $X \in R^c$ if $f_1(x)/f_0(x) < t$
\end{enumerate}
A test satisfying these conditions is a \textit{most powerful} test at level $\alpha$ (that is, $P_1(X \in R)$ is greater or equal to that of any other test with the same level).
\end{lemma}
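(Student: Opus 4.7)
The statement is the classical Neyman--Pearson Lemma, and the plan is to (i) construct the likelihood-ratio test explicitly so that conditions 2 and 3 of the lemma hold by definition, (ii) pick the threshold $t$ so that condition 1 (size equals $\alpha$) holds, and (iii) verify the ``most powerful'' property via the standard pointwise-nonnegativity trick comparing rejection sets.

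\textbf{Step 1: Construction.} First I would define $R \coloneqq \{x : f_1(x)/f_0(x) > t\}$ for a threshold $t \ge 0$ to be chosen. With this definition, conditions 2 and 3 are immediate from the set-membership definition of $R$, modulo measure-zero issues at the boundary $\{f_1/f_0 = t\}$. Condition 1 amounts to choosing $t$ so that $P_0(X \in R) = \alpha$. Letting $G(s) \coloneqq P_0(f_1(X)/f_0(X) > s)$, the function $G$ is nonincreasing, right-continuous, with $G(0^-) = 1$ and $G(\infty) = 0$, so by the intermediate value property for monotone functions there exists $t \ge 0$ with $G(t) = \alpha$ provided $G$ is continuous at that point (which holds under the implicit assumption of absolutely continuous densities so that $P_0(f_1/f_0 = t) = 0$).

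\textbf{Step 2: Most-powerful property.} Let $R'$ be the rejection region of any other test with $P_0(X \in R') \le \alpha$; I want to show $P_1(X \in R') \le P_1(X \in R)$. The key trick is to consider the quantity
\begin{equation*}
    \int \bigl(f_1(x) - t\, f_0(x)\bigr)\bigl(\mathbbm{1}_R(x) - \mathbbm{1}_{R'}(x)\bigr)\, dx.
\end{equation*}
On $R$ we have $f_1 - t f_0 > 0$ and $\mathbbm{1}_R - \mathbbm{1}_{R'} \in \{0,1\}$; on $R^c$ we have $f_1 - t f_0 \le 0$ and $\mathbbm{1}_R - \mathbbm{1}_{R'} \in \{-1,0\}$. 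Hence the integrand is pointwise nonnegative, so the integral is $\ge 0$. Expanding yields
\begin{equation*}
    \bigl(P_1(R) - P_1(R')\bigr) - t\bigl(P_0(R) - P_0(R')\bigr) \ge 0,
\end{equation*}
so $P_1(R) - P_1(R') \ge t\,(\alpha - P_0(R')) \ge 0$ using $t \ge 0$ and $P_0(R') \le \alpha$. This gives the desired inequality.

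\textbf{Main obstacle and edge cases.} The routine algebra is trivial; the only real subtlety is the existence of a threshold $t$ achieving $P_0(R) = \alpha$ exactly. Under absolutely continuous densities (which is the setting of the surrounding paper, since we are modeling $\mathcal{M}(D)$ as a distribution with density $f_{\mathcal{M}(D)}$), the CDF of the likelihood ratio under $P_0$ is continuous and the construction is clean. If the ratio distribution had atoms one would need a randomized test to split mass on the boundary, which I would mention but not pursue in detail since the paper only invokes the density version. I would also note the edge case $t = 0$ (taking $R$ to be the support of $f_1$) and the convention $f_1/f_0 = \infty$ on $\{f_0 = 0, f_1 > 0\}$, both of which fall out of the same argument.
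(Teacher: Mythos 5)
Your proof is correct and is the standard argument: the paper does not prove this lemma itself but simply defers to the cited textbook (Lehmann and Romano), where essentially the same likelihood-ratio construction and pointwise-nonnegativity comparison $\int (f_1 - t f_0)(\mathbbm{1}_R - \mathbbm{1}_{R'})\,dx \ge 0$ appear. Your caveat about the existence of an exact-size threshold $t$ (and the need for a randomized test when the likelihood ratio has atoms under $P_0$) is the right one and matches the textbook treatment, so there is nothing to correct.
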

\begin{proof}
See \cite{lehmann2006testing}.
\end{proof}

\begin{lemma}\label{eq:katz}
\cite{katz1978obtaining} Given two independent Binomial variables with underlying probability parameters $p_1, p_0$, number of trials $N$, and observed values $n_1$ and $n_0$, a $1-\alpha$ confidence interval for the ratio $\ln(p1/p0)$ is
$$ \ln\bigg(\frac{n_1/N}{n_0/N}\bigg) \pm z_{\alpha/2} \sqrt{\frac{1}{n_1} + \frac{1}{n_0} - \frac{2}{N}}$$
where $z_{\alpha/2}$ is the critical value of the standard normal (e.g. 1.96 for $\alpha=0.05$).
\end{lemma}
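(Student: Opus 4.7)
The plan is to derive the interval asymptotically via the delta method, exploiting the independence of the two Binomial samples. Write $\hat p_i = n_i/N$ for $i\in\{0,1\}$; these are the MLEs and by the CLT satisfy
$$\sqrt{N}(\hat p_i - p_i) \xrightarrow{d} \mathcal{N}\!\bigl(0,\, p_i(1-p_i)\bigr).$$
Apply the delta method to $g(x)=\ln x$, whose derivative is $g'(x)=1/x$, to obtain
$$\sqrt{N}(\ln \hat p_i - \ln p_i) \xrightarrow{d} \mathcal{N}\!\Bigl(0,\, \tfrac{1-p_i}{p_i}\Bigr).$$

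Next, I would use the fact that the two counts $n_0, n_1$ are drawn from independent Binomial experiments, so the limit distributions of $\ln \hat p_0$ and $\ln \hat p_1$ are independent. Thus
$$\sqrt{N}\bigl(\ln(\hat p_1/\hat p_0) - \ln(p_1/p_0)\bigr) \xrightarrow{d} \mathcal{N}\!\Bigl(0,\, \tfrac{1-p_1}{p_1} + \tfrac{1-p_0}{p_0}\Bigr).$$
The asymptotic variance (divided by $N$) simplifies as $\frac{1}{Np_1} + \frac{1}{Np_0} - \frac{2}{N}$. Since $\hat p_i \to p_i$ in probability, a consistent plug-in estimator of the variance is obtained by replacing $Np_i$ with $n_i$, yielding $\frac{1}{n_1} + \frac{1}{n_0} - \frac{2}{N}$. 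By Slutsky's theorem this substitution preserves the limiting distribution, so
$$\frac{\ln(n_1/n_0) - \ln(p_1/p_0)}{\sqrt{1/n_1 + 1/n_0 - 2/N}} \xrightarrow{d} \mathcal{N}(0,1).$$
Inverting this pivot at level $\alpha$ (using the critical value $z_{\alpha/2}$) gives exactly the stated interval, since $P\bigl(|Z| \le z_{\alpha/2}\bigr) = 1-\alpha$ for $Z\sim\mathcal{N}(0,1)$.

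The main obstacle will be a careful handling of the remainder terms in the delta method and the validity of the plug-in when $n_0$ or $n_1$ is small. The log transform and the reciprocals blow up near zero, so strictly speaking, coverage is guaranteed only asymptotically and requires $p_0, p_1 \in (0,1)$ bounded away from the boundary; this is also why Algorithm \ref{algo:dp} imposes the minimum-probability constraint $r$ that keeps $n_0/N \geq r$. For a complete argument I would defer to the derivation in \cite{katz1978obtaining}, where the remainder is controlled under $Np_i \to \infty$. The secondary subtlety is that the variance formula $\tfrac{1}{n_1}+\tfrac{1}{n_0}-\tfrac{2}{N}$ can be negative when $n_1=n_0=N$, but in that degenerate case $\hat p_1/\hat p_0=1$ exactly and no inference is needed.
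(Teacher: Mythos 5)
Your derivation is correct and is exactly the standard delta-method argument behind the Katz log interval; the paper itself offers no proof beyond ``See \cite{katz1978obtaining}'', and what you have written is precisely the derivation contained in that reference, so the two approaches coincide. One minor slip: the variance term $\frac{1}{n_1}+\frac{1}{n_0}-\frac{2}{N}$ cannot actually be negative, since $n_i \le N$ forces $\frac{1}{n_i} \ge \frac{1}{N}$; in the case $n_1=n_0=N$ it is exactly zero, which is the degenerate situation you already handle.
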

\begin{proof}
See \cite{katz1978obtaining}
\end{proof}

\begin{corollary}
Given Algorithm \ref{algo:dp} and $N$ samples, the highest detectable privacy violation is 
$$\ln(N) - z_{\alpha/2}\sqrt{1 - \frac{1}{N}}$$
\end{corollary}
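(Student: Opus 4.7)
The plan is to read the corollary as an integer optimization problem. Algorithm \ref{algo:dp} terminates by applying Lemma \ref{eq:katz} to the observed counts $n_1$ and $n_0$, so the largest detectable privacy loss is the maximum of
\[ g(n_1, n_0) \;\coloneqq\; \ln\!\frac{n_1}{n_0} \;-\; z_{\alpha/2}\sqrt{\frac{1}{n_1} + \frac{1}{n_0} - \frac{2}{N}} \]
over $(n_1, n_0) \in \{1,\ldots,N\}^2$. The value $n_0 = 0$ must be excluded because it makes both $\ln(n_1/n_0)$ and $1/n_0$ diverge, forcing $n_0 \geq 1$; similarly $n_1 \leq N$. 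Substituting $(n_1, n_0) = (N, 1)$ and simplifying $\sqrt{1/N + 1 - 2/N}$ recovers exactly $\ln N - z_{\alpha/2}\sqrt{1 - 1/N}$, so the remaining task is to verify that this corner is the maximizer.

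First I would dispose of $n_1$. Every appearance of $n_1$ in $g$ pushes in the same direction: $\ln n_1$ grows with $n_1$ and $1/n_1$ (subtracted under the square root) shrinks. Thus $g$ is strictly increasing in $n_1$, so the optimum picks $n_1 = N$ unconditionally.

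Next I would handle $n_0$ by a pointwise comparison of $g(N,1)$ with $g(N, n_0)$ for each integer $n_0 \geq 2$. Writing the gap as $\ln n_0 - z_{\alpha/2}\bigl(\sqrt{1 - 1/N} - \sqrt{1/n_0 - 1/N}\bigr)$, the subtracted radical term is bounded above by $z_{\alpha/2}\sqrt{1 - 1/N}$ uniformly in $n_0$, while $\ln n_0$ grows without bound. A short case check for the first few integers $n_0$ using $z_{\alpha/2} \leq 2$ (valid for $\alpha \geq 0.05$) handles the small-$n_0$ regime where $\ln n_0$ has not yet dominated. Combined with the first step, this pins the maximum at $(N, 1)$, and substituting back gives the stated bound.

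The main obstacle I expect is precisely this $n_0$ step. A naive monotonicity argument via $\partial g/\partial n_0$ will not work globally, because a direct calculation shows the derivative is negative only when $2\sqrt{n_0 - n_0^2/N} > z_{\alpha/2}$, an inequality that can fail near $n_0 = N$ where $n_0 - n_0^2/N \to 0$. That is why the pointwise comparison above is preferable: it sidesteps the non-monotonic middle entirely, and only uses that the constant bound $z_{\alpha/2}\sqrt{1 - 1/N}$ is quickly overtaken by $\ln n_0$ under the mild assumption on $\alpha$ and $N$ that matches the operating regime of the paper.
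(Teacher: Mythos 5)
Your proof reaches the same corner point $(n_1,n_0)=(N,1)$ and the same final substitution into the Katz interval as the paper, but you are more careful about \emph{what} is being maximized. The paper's proof only argues that the point estimate $\ln(n_1/n_0)$ is maximized at $(N,1)$ and then applies Lemma \ref{eq:katz}; it implicitly assumes that the configuration maximizing the point estimate also maximizes the reported \emph{lower bound}, even though the interval half-width $z_{\alpha/2}\sqrt{1/n_1+1/n_0-2/N}$ itself varies with $(n_1,n_0)$. You close that gap: monotonicity in $n_1$ is immediate, and your observation that $\partial g/\partial n_0$ changes sign near $n_0=N$ (so that a naive monotonicity argument in $n_0$ fails) is correct, as is the fix via pointwise comparison of $g(N,1)$ against $g(N,n_0)$. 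One caveat you half-acknowledge but should make explicit: the claim is genuinely false for very small $N$ --- e.g.\ $g(N,N)=0$ exceeds $g(N,1)=\ln N - z_{\alpha/2}\sqrt{1-1/N}$ whenever $\ln N < z_{\alpha/2}\sqrt{1-1/N}$, i.e.\ roughly $N\le 7$ at $\alpha=0.05$ --- so the corollary should be read as holding for $N$ beyond this threshold, which is trivially satisfied by the paper's settings $N\in\{500,10000\}$. In short, your route is the paper's route plus the missing optimization argument; the paper's version is shorter but rests on an unverified step that your version actually proves.
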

\begin{proof}
Given $N$ samples of distribution $M_1$ and $N$ of distribution $M_0$ with Bernoulli probability $p_1, p_0$ of being in set $S$, the largest value of $\ln(p_1/p_0)$ is when all $N$ of $M_1$ are accepted in $S$ and only one of $M_0$ is (otherwise the ratio is infinite). That is, $n_1=N$ and $n_0=1$. So the maximal value of $\ln(p1/p_0)$ is $\ln(N)$. Then apply Lemma A.3 to get the confidence lower bound, which is the best case reportable by our algorithm.
\end{proof}

\begin{theorem} (DP-Sniper approximately optimal)
Given a minimum measured probability $c$ as in Algorithm 1, neighboring inputs $(D, D')$, $N>0$ samples, the DP-Sniper algorithm finds set $\hat S^t$ with threshold defined by $c$. Let $\tilde\varepsilon(D, D', \hat S^t)$ be the power of this attack, and $\varepsilon(D, D', S^\star)$ be the power of the best possible attack set $S^\star$. Then
$$\tilde\varepsilon(D, D', \hat S^t) \geq \varepsilon(D, D', S^\star) - \frac{\rho}{c} - 2\Big(\frac{\rho}{c}\Big)^2$$
with probability at least $1-\alpha$, where $\rho = \sqrt{\frac{\ln(2/\alpha)}{2N}}$, and $\frac{\rho}{c} \leq \frac{1}{2}$.
\end{theorem}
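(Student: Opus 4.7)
The plan adapts the approximate-optimality argument of \citet{bichsel2021dp}. Define $p_1^\star=\mathbb{P}(\mathcal{M}(D)\in S^\star)$, $p_0^\star=c$, and $\tilde p_j$ as the corresponding true probabilities under the algorithm's set $\hat S^t$, so that $\varepsilon(D,D',S^\star)=\ln(p_1^\star/c)$ and $\tilde\varepsilon(D,D',\hat S^t)=\ln(\tilde p_1/\tilde p_0)$. The task reduces to lower-bounding the latter in terms of the former. First I would invoke Hoeffding's inequality with a union bound over the two Bernoulli estimators controlling $t$: with probability at least $1-\alpha$, both $|\hat p_j-\tilde p_j|\leq\rho$ hold for $j\in\{0,1\}$. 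Conditioning on this event, since the algorithm chooses $t$ so that the empirical $\mathcal{M}(D')$-mass of $\hat S^t$ is at most $c$, the Hoeffding deviation immediately gives $\tilde p_0\leq c+\rho$.

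The more delicate step is to bound $p_1^\star-\tilde p_1$. As argued in Section~\ref{sec:framework}, a Bayes-optimal classifier trained on balanced samples satisfies $p_\theta(D|z)/p_\theta(D'|z)=f_{\mathcal{M}(D)}(z)/f_{\mathcal{M}(D')}(z)$, so $\hat S^t$ is --- up to finite-sample fluctuation --- a true likelihood-ratio rejection region and therefore Neyman-Pearson optimal at its own true level. Because the NP power function has vanishing first variation along the optimum, a $\rho$-shift in the quantile reduces the optimal power only quadratically, yielding $p_1^\star-\tilde p_1\leq O(\rho^2/c)$. Combining with the bound on $\tilde p_0$ gives
\[ \tilde\varepsilon \;\geq\; \ln\frac{p_1^\star-O(\rho^2/c)}{c+\rho} \;=\; \varepsilon + \ln\!\left(1-O\!\left(\tfrac{\rho^2}{c\,p_1^\star}\right)\right) - \ln\!\left(1+\tfrac{\rho}{c}\right). \]
Applying $\ln(1+x)\leq x$ for $x\geq 0$ and $\ln(1-x)\geq -x-x^2$ for $|x|\leq 1/2$ (where the hypothesis $\rho/c\leq 1/2$ enters), and noting that $p_1^\star\geq c$ for any non-trivial attack (else the claim is vacuous) so that $\rho^2/(cp_1^\star)\leq(\rho/c)^2$, one collects the linear and quadratic contributions to recover the stated $\varepsilon-\rho/c-2(\rho/c)^2$.

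The main obstacle is establishing the quadratic scaling $p_1^\star-\tilde p_1=O(\rho^2/c)$: a naive Hoeffding bound on the likelihood-ratio threshold only yields $O(\rho)$, which weakens the quadratic coefficient and fails to match the stated form. The sharper scaling relies crucially on $S^\star$ being a \emph{maximizer} of $p_1$ subject to $p_0\leq c$, so perturbing the constraint by $\rho$ decreases the objective quadratically rather than linearly; making this rigorous requires local smoothness of the likelihood-ratio quantile function near the threshold $k$, absorbed into the Hoeffding constant $\rho$. A secondary subtlety is that the learned classifier is only approximately Bayes-optimal, but the resulting approximation error can be folded into the same high-probability event via an additional union bound.
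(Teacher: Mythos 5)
The paper itself offers no argument here --- its ``proof'' is a pointer to \citet{bichsel2021dp} --- so your proposal has to stand or fall on its own, and it falls at the step you yourself flag as the main obstacle. The claim that $p_1^\star-\tilde p_1=O(\rho^2/c)$ because ``the NP power function has vanishing first variation along the optimum'' is not correct. The Neyman--Pearson set is a maximizer of $p_1$ \emph{among sets of fixed $\mathcal{M}(D')$-measure}; perturbations that preserve the level $c$ indeed cost only second order. But the perturbation induced by the empirical threshold changes the level itself by up to $\rho$, and the constraint $p_0\leq c$ is active at the optimum, so by the envelope theorem the optimal value moves \emph{linearly} in the constraint, with slope equal to the likelihood ratio $k$ at the threshold (the ROC slope), which is strictly positive and can be as large as $e^{\varepsilon^\star}$. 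There is no quadratic scaling to be had, and no amount of ``local smoothness absorbed into the Hoeffding constant'' rescues it.

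The argument in \citet{bichsel2021dp} takes a different route that avoids needing any second-order cancellation. Threshold sets are nested in $t$, so $\hat S^t$ and the ideal level-$c$ set $S^\star$ differ by a region $R$ of $\mathcal{M}(D')$-mass at most $\rho$; on $R$ the likelihood ratio is bounded by the \emph{average} ratio over the retained set, giving the multiplicative bound $P(\mathcal{M}(D)\in\hat S^t)\geq P(\mathcal{M}(D)\in S^\star)\cdot(1-\rho/c)$ rather than an additive $O(\rho^2/c)$ loss. Both the linear and the quadratic terms in the theorem then come from elementary logarithm bounds: $-\ln(1+\rho/c)\geq-\rho/c$ in one case of the level perturbation, and $\ln(1-\rho/c)\geq-\rho/c-2(\rho/c)^2$ (valid for $\rho/c\leq 1/2$, which is where that hypothesis enters) in the other. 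So the $2(\rho/c)^2$ term is a Taylor remainder, not a variational one. Two secondary points: the comparator $S^\star$ must be the best attack \emph{subject to} $P(\mathcal{M}(D')\in S^\star)\geq c$ --- unconstrained, the bound is false --- and the assumption that the learned classifier recovers the exact posterior is a modeling assumption in the original work, not something you can fold into the Hoeffding union bound.
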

\begin{proof}
See \cite{bichsel2021dp}
\end{proof}

\newpage

\section{Additional Results}

Auditing results on all datasets follow:

\begin{figure*}[b!] 
\begin{subfigure}{0.99\textwidth}
    \centering
    \includegraphics[width=0.23\textwidth, trim=0cm 0cm 0cm 0cm]{pdf_figs/compare/compare_lr_obj_adult.pdf}
    \includegraphics[width=0.23\textwidth, trim=0cm 0cm 0cm 0cm]{pdf_figs/compare/compare_lr_out_adult.pdf}
    \includegraphics[width=0.23\textwidth, trim=0cm 0cm 0cm 0cm]{pdf_figs/compare/compare_nb_adult.pdf}
    \includegraphics[width=0.23\textwidth, trim=0cm 0cm 0cm 0cm]{pdf_figs/compare/compare_rf_adult.pdf}
    \caption[short]{adult dataset}
\end{subfigure}

\begin{subfigure}{0.99\textwidth}
    \centering
    \includegraphics[width=0.23\textwidth, trim=0cm 0cm 0cm 0cm]{pdf_figs/compare/compare_lr_obj_credit.pdf}
    \includegraphics[width=0.23\textwidth, trim=0cm 0cm 0cm 0cm]{pdf_figs/compare/compare_lr_out_credit.pdf}
    \includegraphics[width=0.23\textwidth, trim=0cm 0cm 0cm 0cm]{pdf_figs/compare/compare_nb_credit.pdf}
    \includegraphics[width=0.23\textwidth, trim=0cm 0cm 0cm 0cm]{pdf_figs/compare/compare_rf_credit.pdf}
    \caption[short]{credit dataset}
\end{subfigure}

\begin{subfigure}{0.99\textwidth}
    \centering
    \includegraphics[width=0.23\textwidth, trim=0cm 0cm 0cm 0cm]{pdf_figs/compare/compare_lr_obj_iris.pdf}
    \includegraphics[width=0.23\textwidth, trim=0cm 0cm 0cm 0cm]{pdf_figs/compare/compare_lr_out_iris.pdf}
    \includegraphics[width=0.23\textwidth, trim=0cm 0cm 0cm 0cm]{pdf_figs/compare/compare_nb_iris.pdf}
    \includegraphics[width=0.23\textwidth, trim=0cm 0cm 0cm 0cm]{pdf_figs/compare/compare_rf_iris.pdf}
    \caption[short]{iris dataset}
\end{subfigure}

\begin{subfigure}{0.99\textwidth}
    \centering
    \includegraphics[width=0.23\textwidth, trim=0cm 0cm 0cm 0cm]{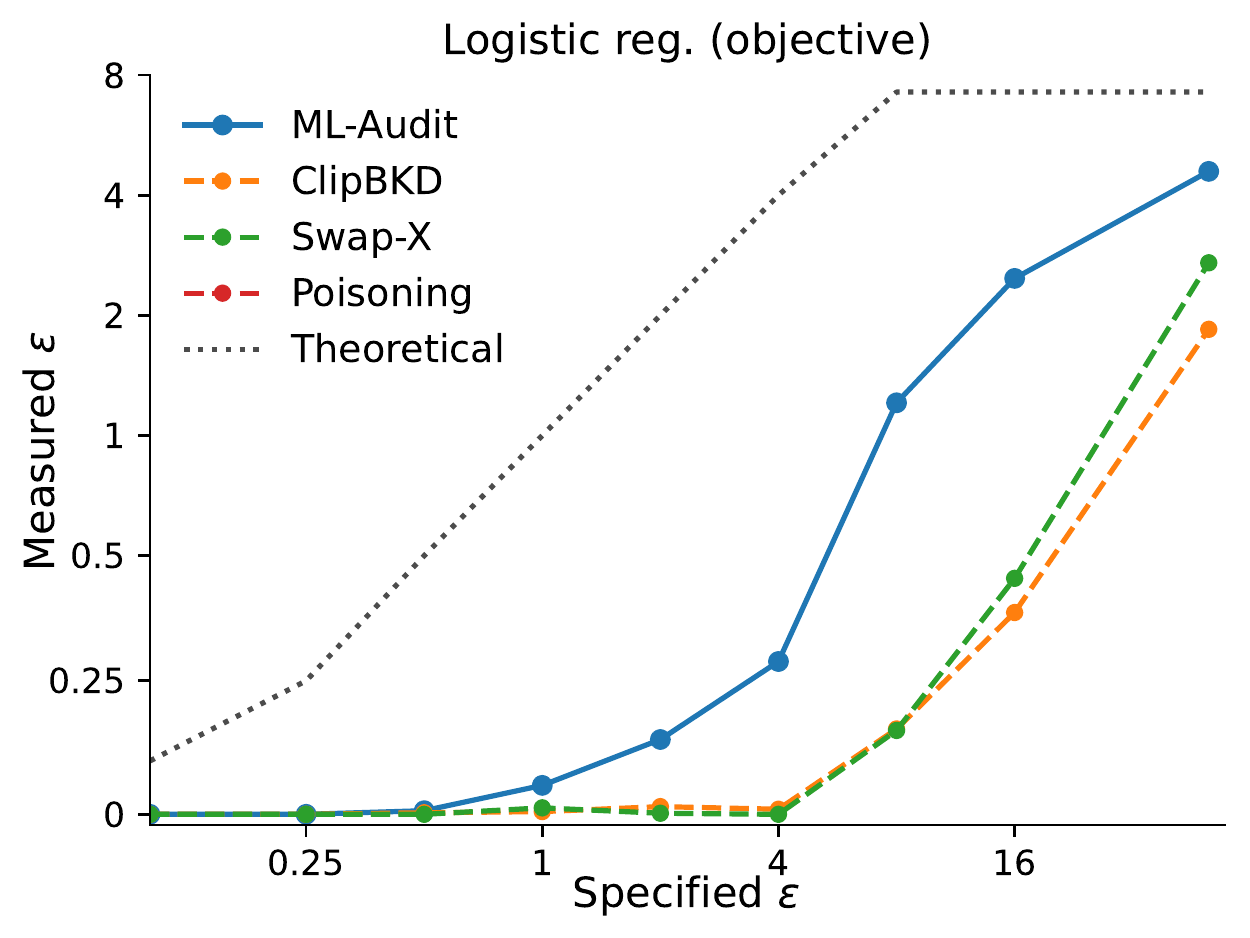}
    \includegraphics[width=0.23\textwidth, trim=0cm 0cm 0cm 0cm]{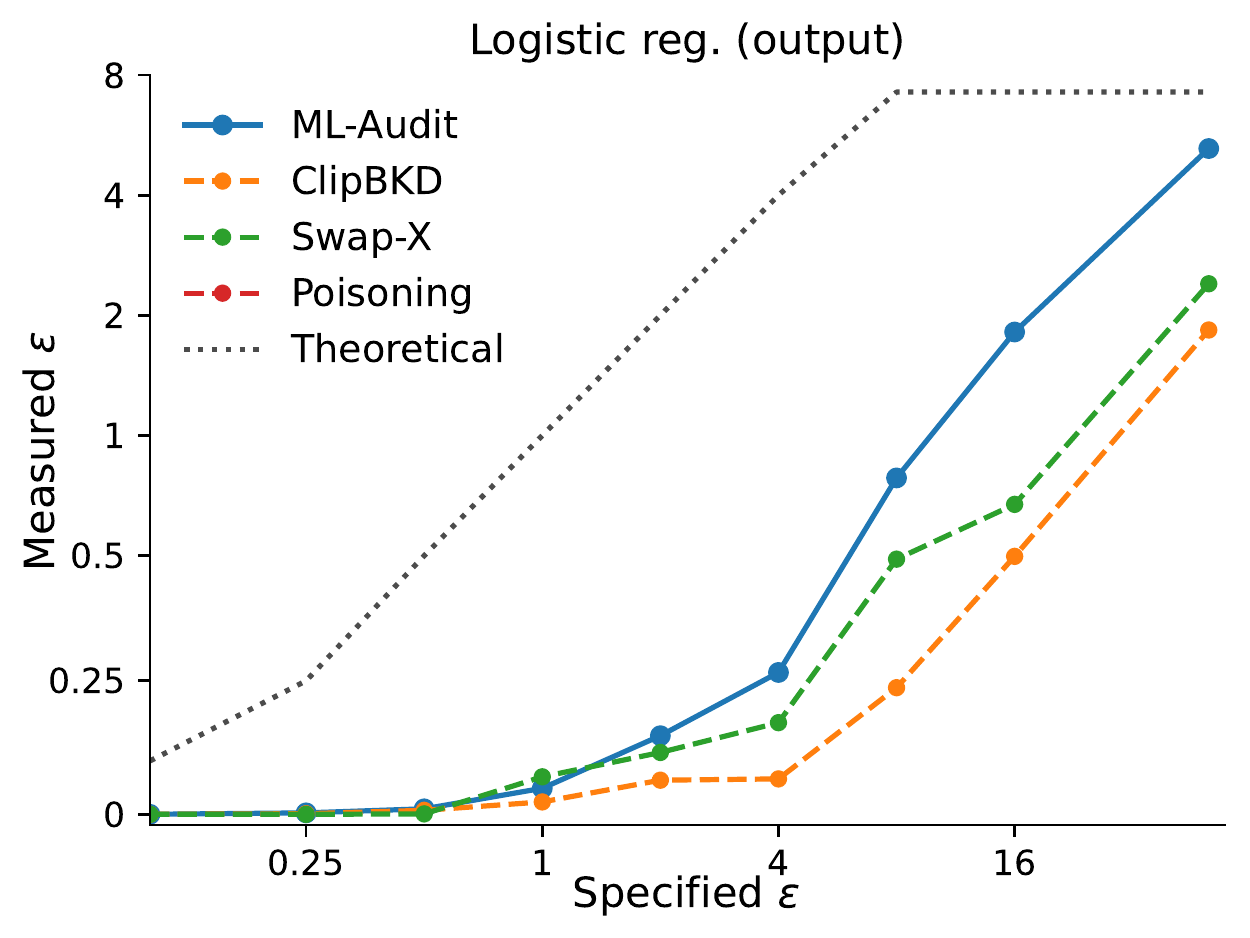}
    \includegraphics[width=0.23\textwidth, trim=0cm 0cm 0cm 0cm]{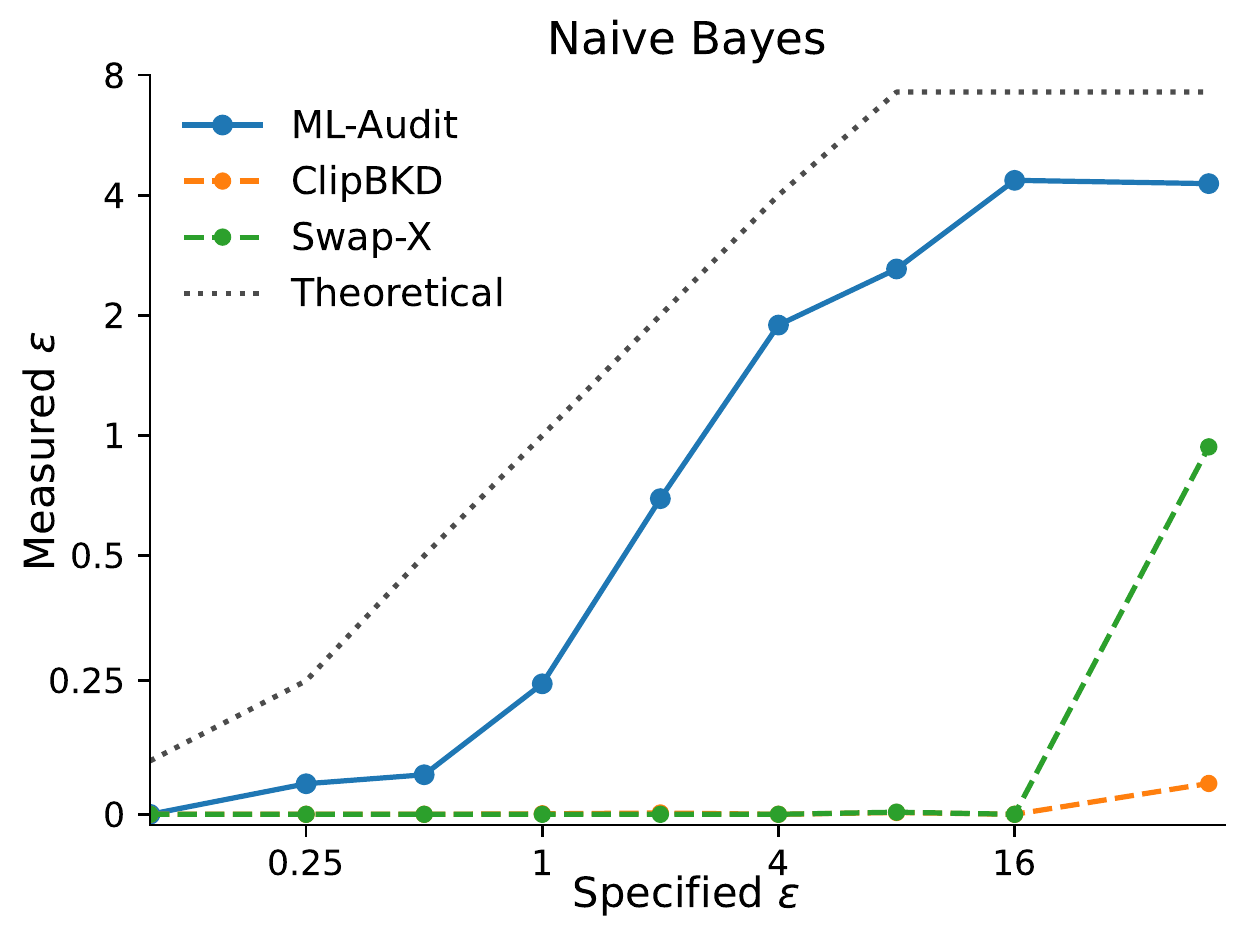}
    \includegraphics[width=0.23\textwidth, trim=0cm 0cm 0cm 0cm]{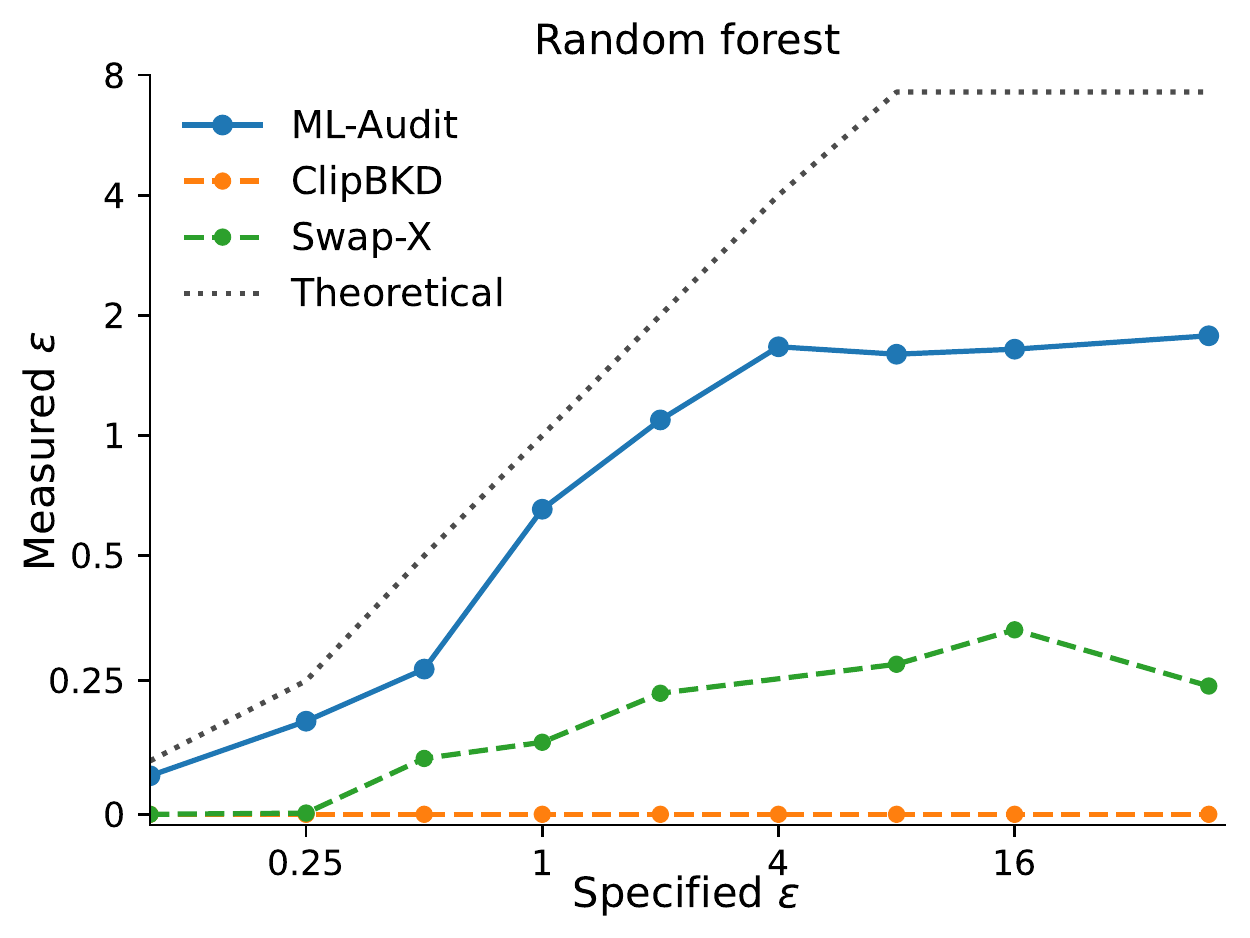}
    \caption[short]{breast-cancer dataset}
\end{subfigure}

\begin{subfigure}{0.99\textwidth}
    \centering
    \includegraphics[width=0.23\textwidth, trim=0cm 0cm 0cm 0cm]{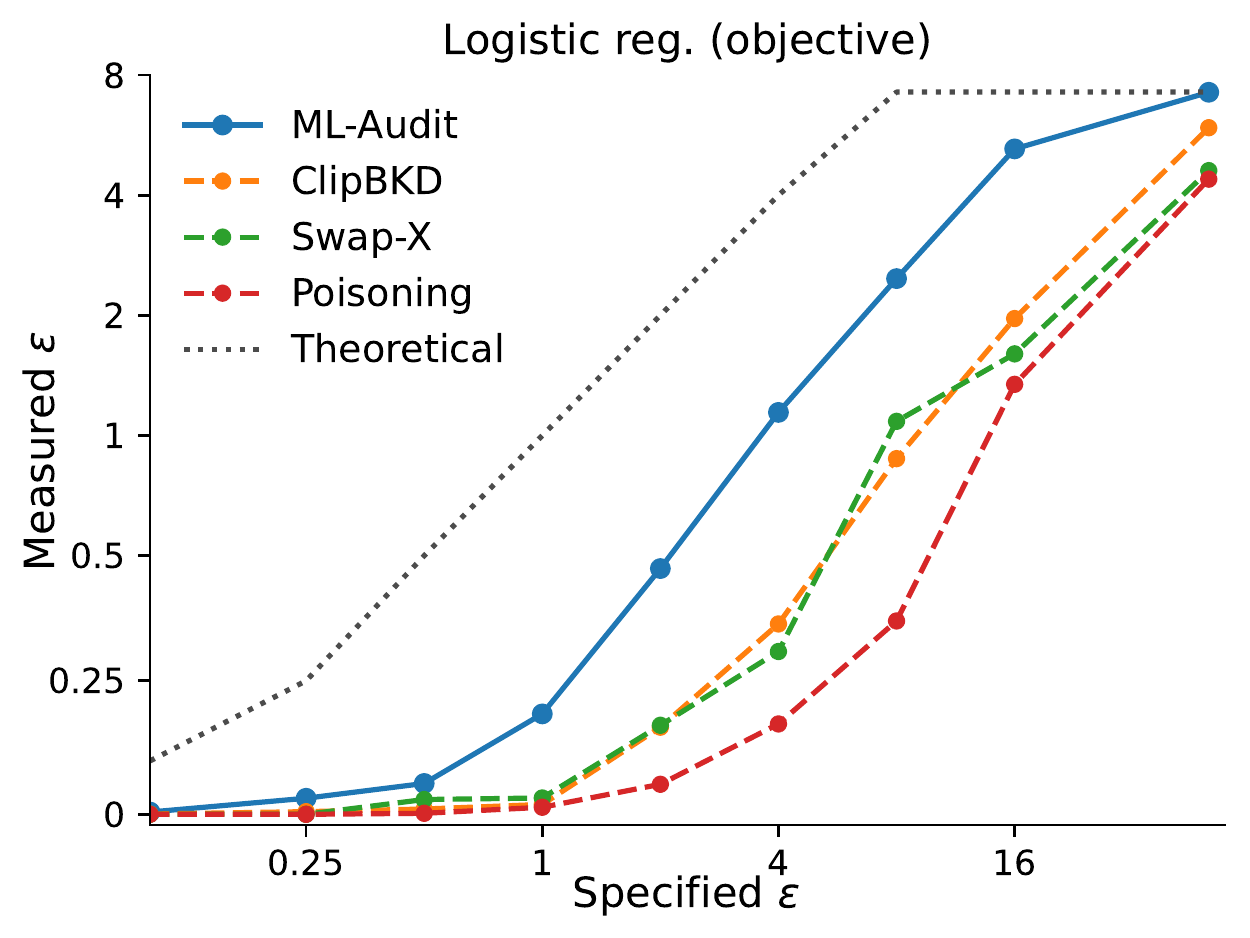}
    \includegraphics[width=0.23\textwidth, trim=0cm 0cm 0cm 0cm]{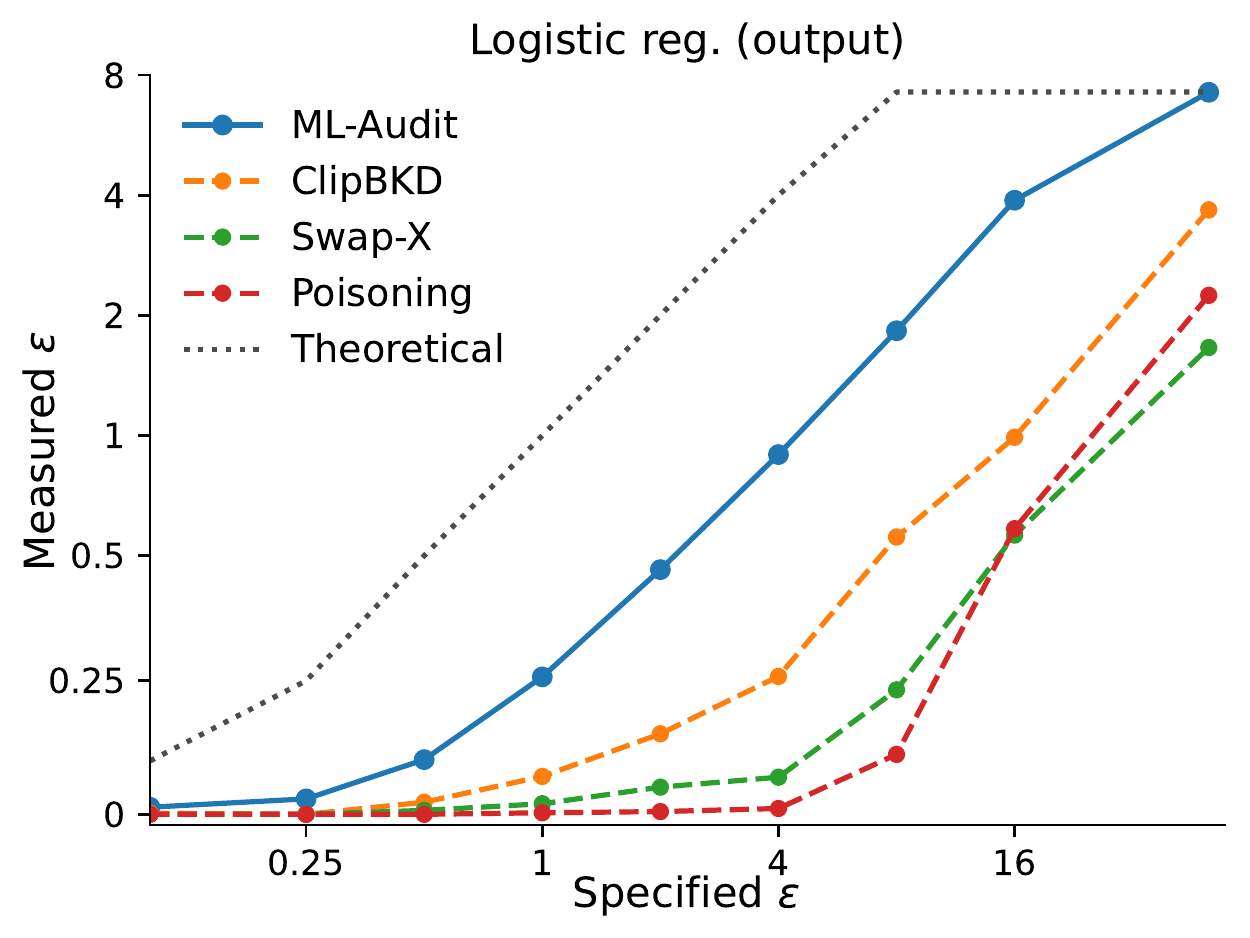}
    \includegraphics[width=0.23\textwidth, trim=0cm 0cm 0cm 0cm]{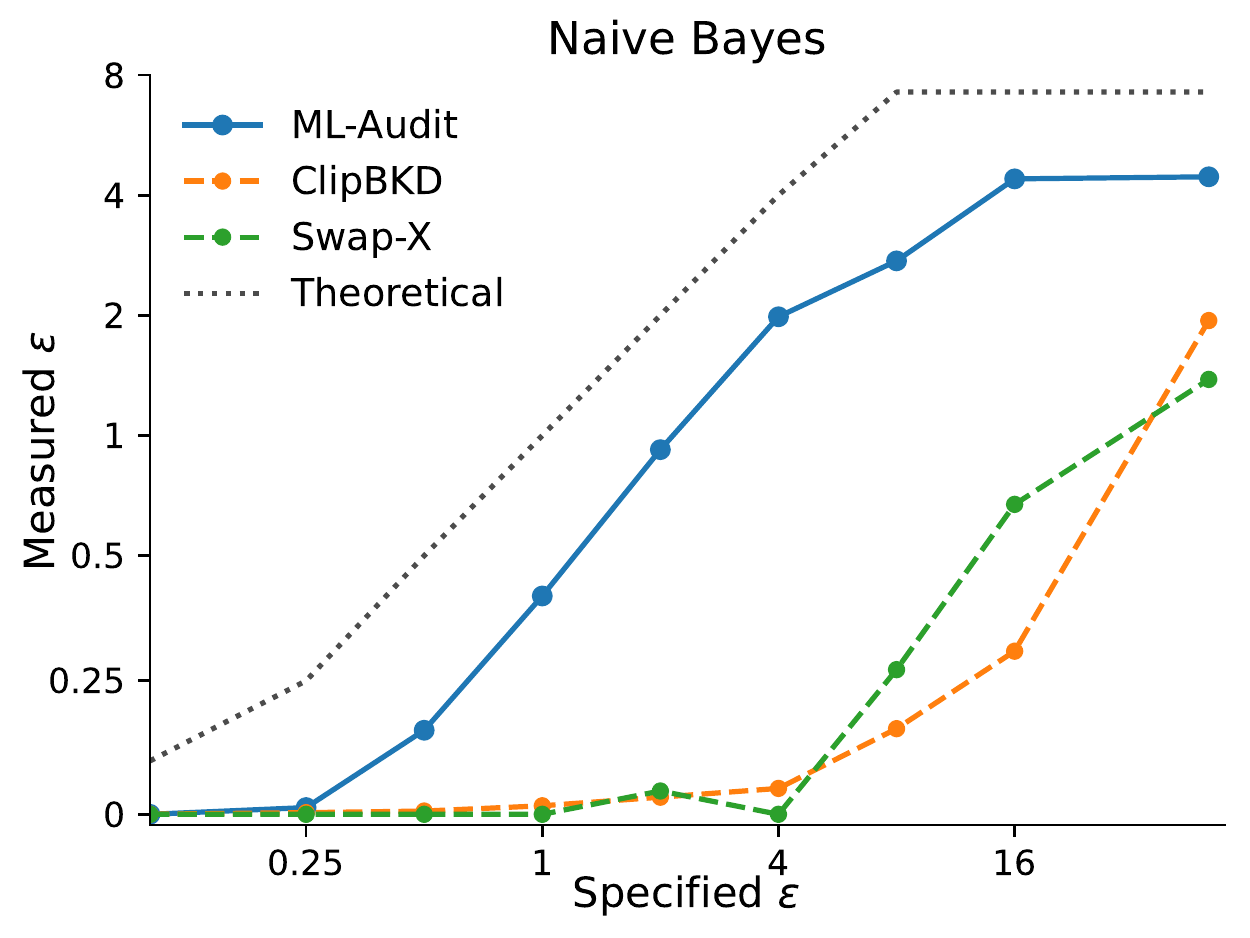}
    \includegraphics[width=0.23\textwidth, trim=0cm 0cm 0cm 0cm]{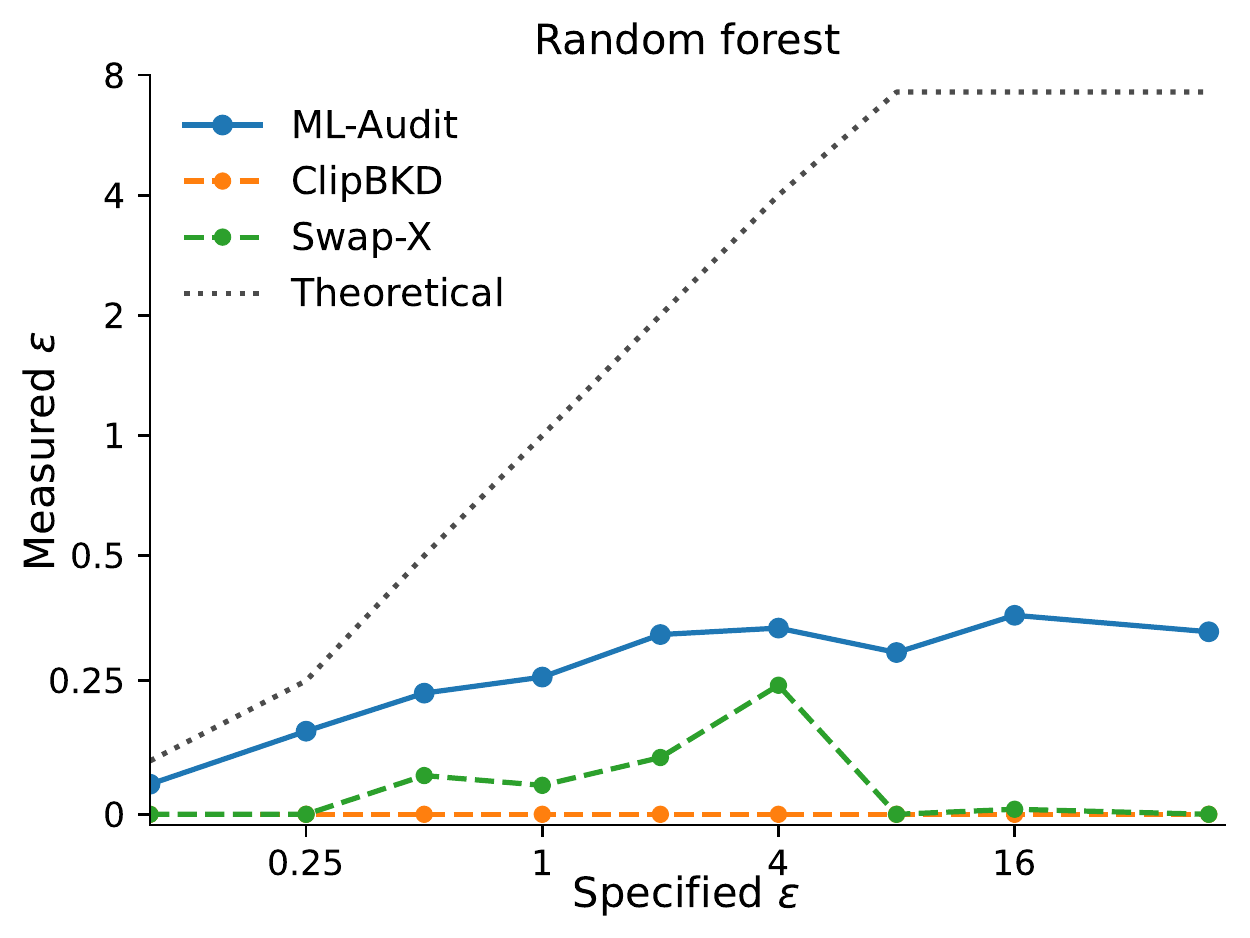}
    \caption[short]{banknote dataset}
\end{subfigure}

\begin{subfigure}{0.99\textwidth}
    \centering
    \includegraphics[width=0.23\textwidth, trim=0cm 0cm 0cm 0cm]{pdf_figs/compare/compare_lr_obj_thoracic.pdf}
    \includegraphics[width=0.23\textwidth, trim=0cm 0cm 0cm 0cm]{pdf_figs/compare/compare_lr_out_thoracic.pdf}
    \includegraphics[width=0.23\textwidth, trim=0cm 0cm 0cm 0cm]{pdf_figs/compare/compare_nb_thoracic.pdf}
    \includegraphics[width=0.23\textwidth, trim=0cm 0cm 0cm 0cm]{pdf_figs/compare/compare_rf_thoracic.pdf}
    \caption[short]{thoracic dataset}
\end{subfigure}
\caption{Specified vs detected privacy risk using our ML-Audit framework comparing our perturbation attack (blue), ClipBKD (orange) and a row-swap baseline (green), on six datasets. For logistic regression, an additional poisoning attack (red) can be considered an ablation of our approach. Note that the y-axis is in log-scale. }
\label{fig:all_plots}
\end{figure*}

\FloatBarrier

\subsection{Empirical analysis of $k$ to perturb}

We highlight some results comparing $k=\{1,2,4,8\}$ for the same attack, model, and dataset. As shown, small values of $\varepsilon$ benefit from larger $k$, while at larger $\varepsilon$, $k$ is detrimental because it requires dividing the final privacy estimate. In our final experiments we adopt $k=8$ for all $\varepsilon \leq 2$, $k=2$ for $2 < \varepsilon \leq 8$, and $k=1$ for $\varepsilon \in \{16, 50\}$.

\begin{figure}[ht]
\begin{subfigure}{\columnwidth}
\centering
\adjustbox{max width=0.48\columnwidth}{
\includegraphics[]{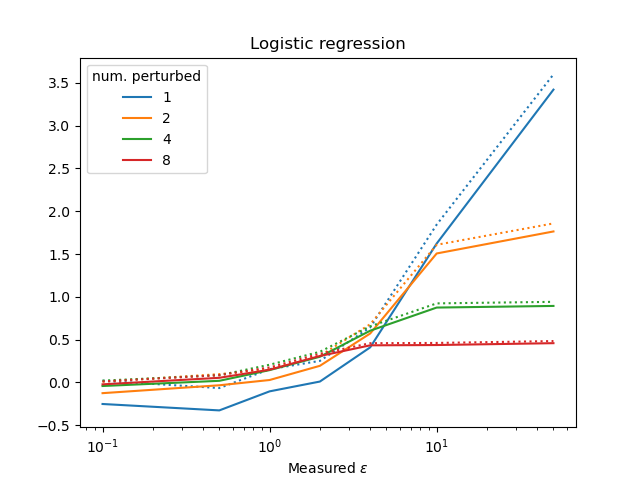}
}
\adjustbox{max width=0.48\columnwidth}{
\includegraphics[]{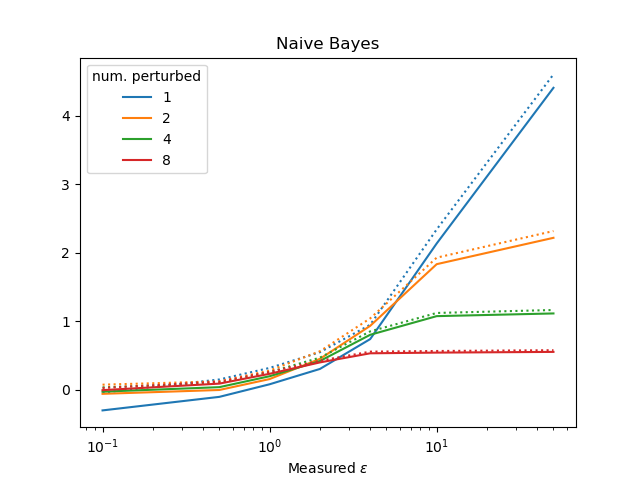}
}
\end{subfigure}
\end{figure}

\FloatBarrier

\subsection{Coverage simulation}
We compare the baseline Clopper-Pearson interval with the Katz-log interval specifically designed for ratios of binomial variables. At a $95\%$ desired coverage ($\alpha=0.05$), we find the Katz log intervals to be precise while the Clopper-Pearson are highly conservative. This is to be expected since the Clopper-Pearson approach separately bounds the two binomial variables before taking the ratio: the lower bound of the numerator divided by the upper bound of the denominator. Thus intuitively at a $\alpha=0.05$ level it actually computes a quantity that holds with probability on the order of $\alpha^2$.

\begin{figure}[ht]
\begin{subfigure}{\columnwidth}
\centering
\adjustbox{max width=0.48\columnwidth}{
\begin{tikzpicture}

\definecolor{color0}{rgb}{0.12156862745098,0.466666666666667,0.705882352941177}
\definecolor{color1}{rgb}{1,0.498039215686275,0.0549019607843137}
\definecolor{color2}{rgb}{0.172549019607843,0.627450980392157,0.172549019607843}

\begin{axis}[
legend cell align={left},
legend style={
  fill opacity=0.8,
  draw opacity=1,
  text opacity=1,
  at={(0.03,0.03)},
  anchor=south west,
  draw=white!80!black
},
tick align=outside,
tick pos=left,
title={Clopper-Pearson interval},
x grid style={white!69.0196078431373!black},
xlabel={\(\displaystyle P_1\)},
xmin=0.008, xmax=0.052,
xtick style={color=black},
y grid style={white!69.0196078431373!black},
ylabel={Coverage},
ymin=0.8, ymax=1,
ytick style={color=black}
]
\addplot [semithick, color0]
table {%
0.01 0.9985
0.015 0.9972
0.02 0.9974
0.03 0.996
0.04 0.9942
0.05 0.9935
};
\addlegendentry{N=1000}
\addplot [semithick, color1]
table {%
0.01 0.996
0.015 0.9957
0.02 0.9952
0.03 0.9937
0.04 0.9935
0.05 0.991
};
\addlegendentry{N=10000}
\addplot [semithick, color2]
table {%
0.01 0.9954
0.015 0.9947
0.02 0.9933
0.03 0.9926
0.04 0.9912
0.05 0.9918
};
\addlegendentry{N=50000}
\end{axis}

\end{tikzpicture}
}
\adjustbox{max width=0.48\columnwidth}{
\begin{tikzpicture}

\definecolor{color0}{rgb}{0.12156862745098,0.466666666666667,0.705882352941177}
\definecolor{color1}{rgb}{1,0.498039215686275,0.0549019607843137}
\definecolor{color2}{rgb}{0.172549019607843,0.627450980392157,0.172549019607843}

\begin{axis}[
legend cell align={left},
legend style={fill opacity=0.8, draw opacity=1, text opacity=1, draw=white!80!black},
tick align=outside,
tick pos=left,
title={Binomial ratio interval},
x grid style={white!69.0196078431373!black},
xlabel={\(\displaystyle P_1\)},
xmin=0.008, xmax=0.052,
xtick style={color=black},
y grid style={white!69.0196078431373!black},
ylabel={Coverage},
ymin=0.8, ymax=1,
ytick style={color=black}
]
\addplot [semithick, color0]
table {%
0.01 0.9599
0.015 0.9591
0.02 0.9574
0.03 0.9567
0.04 0.9587
0.05 0.9566
};
\addlegendentry{N=1000}
\addplot [semithick, color1]
table {%
0.01 0.9532
0.015 0.9519
0.02 0.9519
0.03 0.9497
0.04 0.9462
0.05 0.9493
};
\addlegendentry{N=10000}
\addplot [semithick, color2]
table {%
0.01 0.952
0.015 0.9512
0.02 0.9476
0.03 0.9528
0.04 0.9478
0.05 0.9478
};
\addlegendentry{N=50000}
\end{axis}

\end{tikzpicture}
}
\end{subfigure}
\end{figure}

\FloatBarrier
\newpage

\section{DP-SGD Result}

\begin{figure}[!h]
    \centering
    \includegraphics[width=0.95\columnwidth, trim=0.5cm 0.5cm 0.5cm 0.5cm]{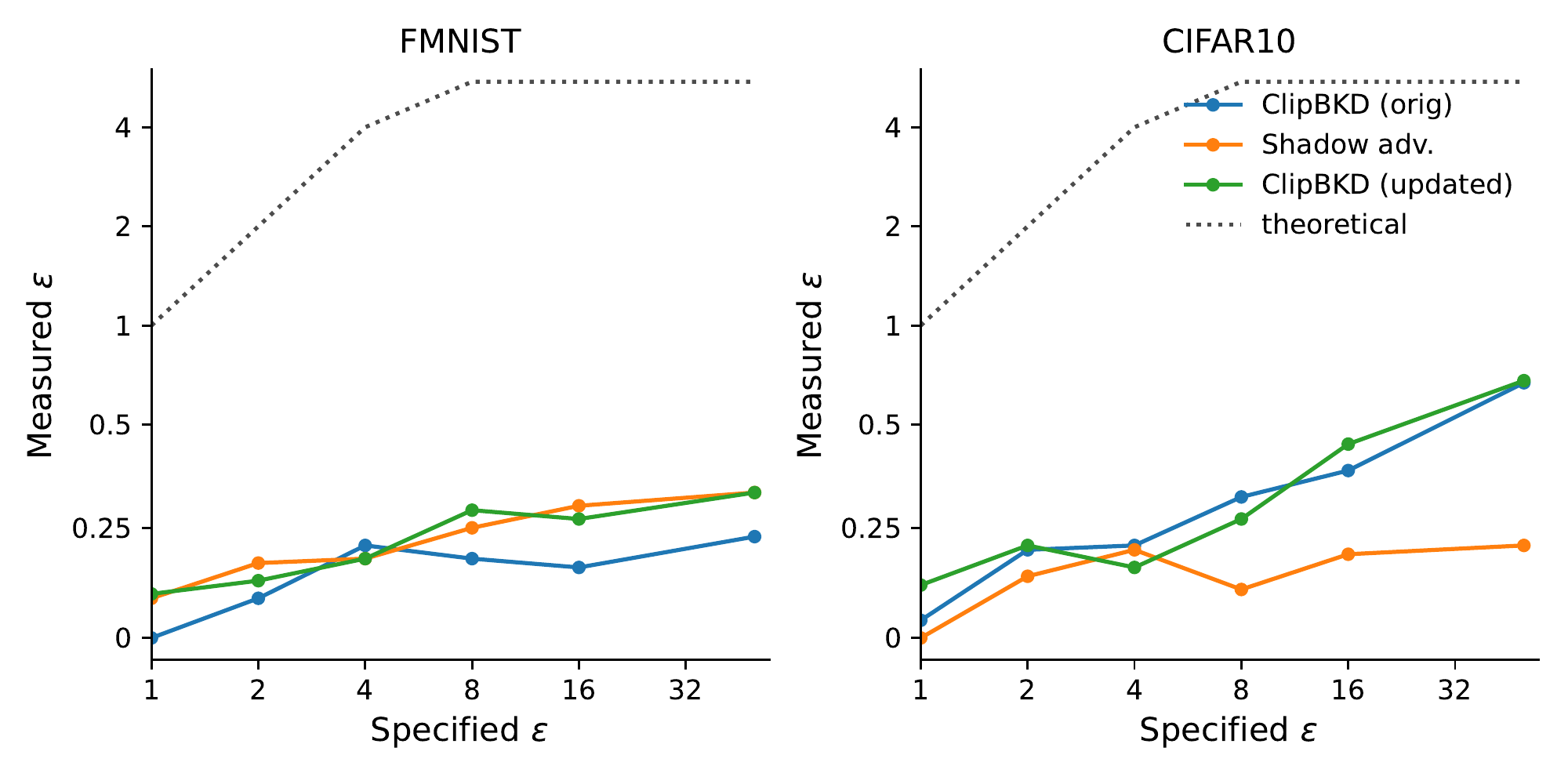}
    \caption{DP-SGD: Comparing existing attacks ClipBKD (blue) and shadow adversarial attack (orange). Updating ClipBKD with additional $\tau$ (green) leads to moderate improvement on FMNIST.}
    \label{fig:dpsgd}
\end{figure}

Our findings here are consistent with what is reported in the benchmark works. The shadow adversarial attack here outperforms ClipBKD on FMNIST and vice versa on CIFAR10. In general the auditing performance of these attacks are weaker compared to the other models. We believe this to be a combination of small sample size and resistance of DP-SGD to data perturbations: we found that more perturbations $k=4,8$ were required to induce detectable changes at high $\varepsilon_{th}$. 

\section{Additional experiment details}

Our work considers binary classification so we restrict all datasets to classes 0 and 1. Datasets which are larger than 1000 points are subsampled to 1000. For evaluating random forest, which scales poorly with dataset size, we further subsample all datasets to 500 points. Results do not change significantly with dataset size (it is currently a pure Python implementation), as the DP noise added is calibrated to dataset statistics. Hyperparameter details are in the Appendix.

Our experiments were run over a cluster of 240 CPUs over a week. A single auditing run (40k retrainings, not DP-SGD) averaged 10-40 min depending on the dataset and model, while a single DP-SGD audit took about 3 hrs for 2k retrainings.

\subsection{DP-SGD}

For FMNIST, we use a CNN with three 2D-conv layers and 2x2 max-pooling. The number of filters starts at 4 and is doubled to 8 at the third convolution. This is followed by a fully-connected layer. For CIFAR10, the overall architecture is the same but the number of filters is doubled for all conv layers.

We set the clipping norm to 0.002 similar to previous work and use learning rate 1e-3 with RMSprop. We train FMNIST for 10 epochs and CIFAR10 for 15. For all settings tested, model accuracy is over $98\%$ so we believe our settings are reasonable.

Note that in \cite{jagielski2020auditing} one- and two-layer fully connected networks are studied. Thus our models have higher capacity and are more realistic for the datasets studied. Our results for both baselines are consistent with reported.

During training our observations correlate with the comments in \cite{jagielski2020auditing} about the normalization of the dataset vs impact of weight initialization. For example, we find the strongest auditing performance by far when FMNIST is kept in the original feature scaling of $[0, 255]$, with a detected privacy risk of around 0.7 for $\varepsilon=50$. In contrast, when normalized to $[-1, 1]$ or $[0, 1]$, the highest detected privacy risk drops to around 0.4. We believe this to be the intersection of a few factors, including weight initialization (affecting initial variance of the network embeddings), the max grad norm clipping value, and the learning rate. We think this phenomenon worthy of further study to better understand the nuances affecting privacy in networks trained with DP-SGD.

\subsection{Random Forest}
Derivation of attack: The class label $y(l)$ of a leaf $l$ is determined as 
$$ P(y(l) = z) \propto \exp \bigg( \frac{\epsilon \cdot u(z, l)} {2 e^{-j\epsilon}}   \bigg)$$
where $u(z,l)$ is 1 if $z$ is the majority label in $l$ and 0 otherwise, and $j$ is the number of excess points for the majority. We want a change of a single point that can induce a large change of this probability. As $j$ increases the probability rapidly converges to 1, so for larger $j$ the difference is statistically indistinguishable. We can compare the ratio when $j=2$ and $j=1$, and verify that the largest difference is when $j=1$ and the majority is flipped.

Note that this requires actively flipping the label of a point rather than just removing or adding a point to make the majority an equality. In the case of equal proportions of classes then the final probability becomes 0.5 which is not as strong as if the majority flips.

We use $m=15$ trees and depth 10 in our study.

We further note that the detectable privacy violation is hyperparameter-dependent. By our analysis, growing the tree depth with $n$ sufficiently such that each data point is solitary would guarantee the effectiveness of the class flip attack. However, the size of the tree increases exponentially and the runtime is too costly with the current available implementation. As a compromise we cap our dataset sizes to 500 and disregard categorical features in the random splits, as they cannot be used to isolate points. This provides additional evidence that practical privacy risk is dataset-specific. We note that given the goal to analyze the largest privacy risk in random forest (e.g. to assess whether there are implementation errors), it makes sense to select parameters and dataset sizes to maximize privacy risk.

\subsection{Logistic regression}

We found in the implementations of logistic regression that the performance of the model at certain $\varepsilon$ and $\lambda$ combinations would lead to severe degradation of optimization, essentially giving degenerate coefficients. In such cases, nearly no privacy loss is detectable. Our goal is to assess the models at a reasonable performance level reflective of their actual use case on each dataset, so we adjusted the regularization $\lambda$ to avoid these situations while maintaining weak regularization consistent with statistical literature (e.g. $\lambda \approx 1/\sqrt{n}$). Our code uses scikit-learn which uses the $C=1/(n\cdot \lambda)$ convention. In our code we specified $C=1.0$ for objective perturbation, $C=0.01$ for output perturbation. For $\varepsilon$ around 1-8 we needed $C=10.0$ to deal with the degeneracy issue in objective perturbation.

\newpage
\section{Adapting ML-Audit for $\delta>0$}

In our current study all methods we evaluate use standard implementations with $\delta=0$, with the exception of DP-SGD, where $\delta$ is very small. Our method can be readily adapted for $\delta > 0$ by keeping the $\delta$ in the auditing objective of Section 4. Then the objective splits into two quantities, the first of which is what we currently bound. For the second we can use a binomial proportion CI on the denominator while setting $\delta$ to be the level we wish to audit at. If we don't have prior information we could also vary $\delta$ which results in a curve of estimated $\delta$ vs $\hat\varepsilon$.

Nonetheless, in practical applications, $\delta\in o(1/n)$ yielding values which are exceedingly difficult to measure (and which essentially have no effect on our measurements). For example, the minimum probabilities which we can measure with confidence via Monte Carlo are around 0.01 while $\delta<0.001$ for a 1000-point dataset.

\section{Data-specific epsilon vs influence}

As the $\varepsilon$ of a DP ML mechanism holds for a worst-case pair of neighboring datasets, we note that many real-world datasets may have lower achievable maximum privacy leakage. We can take the maximum $\hat\varepsilon$ estimated from our logistic regression auditing experiments as a proxy lower bound for this quantity. We let $Y$ be this value relative to a true $\varepsilon=1$.

Additionally, we define $X$ to be the norm of the influence function applied at our perturbation point, relative to the average influence function norm of each dataset. The following figure compares $X$ and $Y$:

\begin{figure}[h]
    \centering
    \includegraphics[width=0.55\textwidth]{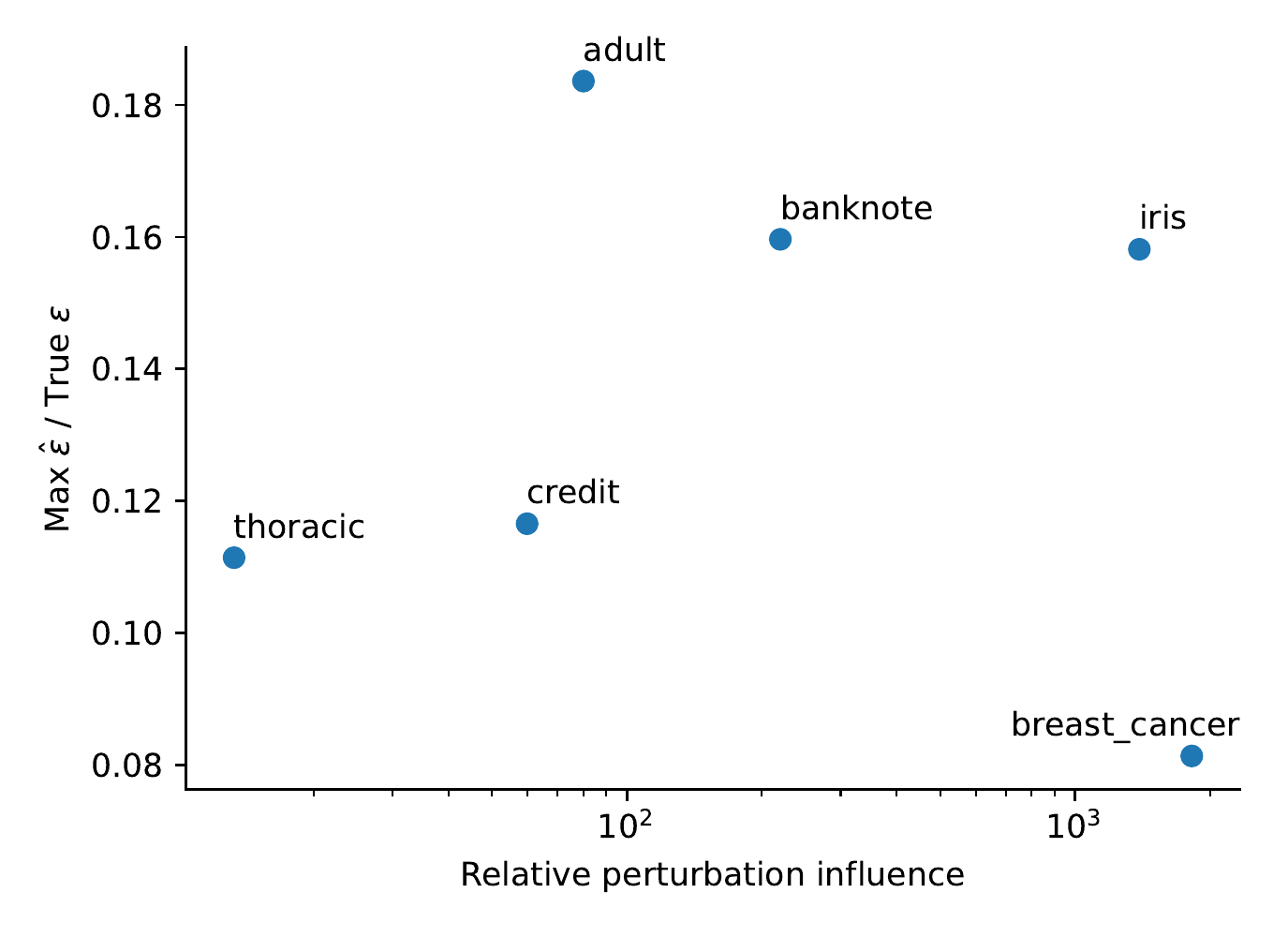}
    \caption{Relative perturbation point influence as a predictor of maximal privacy violation}
    \label{fig:my_label}
\end{figure}

\end{document}